\DeclareMathOperator*{\argmin}{argmin}
\DeclareMathOperator*{\argmax}{argmax}
\newtheorem{lemma}{Lemma}
\newtheorem{corollary}{Corollary}
\newtheorem{definition}{Definition}
\newtheorem{theorem}{Theorem}
\def\@seccntformat#1{\@ifundefined{#1@cntformat}%
   {\csname the#1\endcsname\quad}  % default
   {\csname #1@cntformat\endcsname}% enable individual control
}
\let\oldappendix\appendix %% save current definition of \appendix
\renewcommand\appendix{%
    \oldappendix
    \newcommand{\section@cntformat}{\appendixname~\thesection\quad}
}
\lstdefinestyle{mystyle}{
  columns=fixed,  % 等幅
  basewidth=0.5em,  % 字間詰め
  lineskip=-3pt,  % 行間詰め
  frame=lines,  % コードの上下にライン
  breaklines=true,             % 行が長い場合に折り返す
  columns=[l]{fullflexible},   % 列幅を自動調整する（見た目が良くなる）
  numbers=left,                % 行番号を左側に表示
  numberstyle={\scriptsize},   % 行番号のスタイル．スクリプトサイズのフォントを使用
  stepnumber=1,                % 何行ごとに行番号を表示するか
  % 書式設定
  basicstyle={\ttfamily\small},  % 全体の書式設定
  keywordstyle=[1]{\color{RoyalBlue}},  % keywords[1]の書式設定 (Python だと予約語)
  keywordstyle=[2]{\color{VioletRed}},  % keywords[2]の書式設定 (Python だと組み込み関数)
  stringstyle={\color{FireBrick}},  % 文字列リテラルの設定
  commentstyle={\color{SeaGreen}},  % コメントの設定
}
\title{Dose-finding design based on level set estimation in phase I cancer clinical trials}
\author{Keiichiro Seno$^{1a}$, Kota Matsui$^{2b\ast}$, Shogo Iwazaki$^3$,\\
Yu Inatsu$^4$, Shion Takeno$^{5, 6}$ and Shigeyuki Matsui$^{2, 7}$}
\date{}
\begin{document}

\maketitle

\noindent
%$^1$~Department of Biostatistics, Nagoya University, 65 Tsurumai-cho, Showa-ku, Nagoya 466-8550, Japan
$^1$~Department of Biostatistics, Nagoya University

\noindent
%$^2$~Department of Biostatistics, Kyoto University, Yoshida-honmachi, Sakyo-ku, Kyoto-shi, Kyoto 606-8501, Japan
$^2$~Department of Biostatistics, Kyoto University

\noindent
%$^3$~MI-6 Ltd., Tensho Office Nihonbashi, 8-13 Nihonbashi-Kobunacho, Chuo-ku, Tokyo, Japan
$^3$~MI-6 Ltd.

\noindent
%$^4$~Department of Computer Science, Nagoya Institute of Technology, Gokiso-cho, Showa-ku, Nagoya 466-8555, Japan
$^4$~Department of Computer Science, Nagoya Institute of Technology

\noindent
%$^5$~Department of Mechanical Systems Engineering, Nagoya University, Furo-cho, Chikusa-ku, Nagoya 464-8603, Japan
$^5$~Department of Mechanical Systems Engineering, Nagoya University

\noindent
%$^5$~Center for Advanced Intelligence Project, RIKEN, 1-4-1 Nihonbashi, Chuo-ku, Tokyo 103-0027, Japan
$^6$~Center for Advanced Intelligence Project, RIKEN

\noindent
%$^6$~Research Center for Medical and Health Data Science, The Institute of Statistical Mathematics, 10-3 Midori-cho, Tachikawa, Tokyo 190-8562, Japan
$^7$~Research Center for Medical and Health Data Science, The Institute of Statistical Mathematics

\footnote{\hspace{-18pt}  
a:senokei.bs@gmail.com \\ 
b:matsui.kota.8k@kyoto-u.ac.jp \\ 
$\ast$ corresponding author}

\begin{abstract}
\noindent
The primary objective of phase I cancer clinical trials is to evaluate the safety of a new experimental treatment and to find the maximum tolerated dose (MTD).
We show that the MTD estimation problem can be regarded as a level set estimation (LSE) problem whose objective is to determine the regions where an unknown function value is above or below a given threshold.
Then, we propose a novel dose-finding design in the framework of LSE.
The proposed design determines the next dose on the basis of an acquisition function incorporating uncertainty in the posterior distribution of the dose-toxicity curve as well as overdose control.
Simulation experiments show that the proposed LSE design achieves a higher accuracy in estimating the MTD and involves a lower risk of overdosing allocation compared to existing designs, thereby indicating that it provides an effective methodology for phase I cancer clinical trial design.
\end{abstract}

%=======================
\section{Introduction}
%=======================
The primary objective of phase I cancer clinical trials of new experimental treatments is to estimate the maximum tolerated dose (MTD), defined as the maximum dose at which the probability of unacceptable adverse events, referred to as the dose-limiting toxicity (DLT), is below a target level, such as 20\% or 30\%.
The MTD estimate serves as the basis for determining the recommended dose in subsequent clinical trials.
The various phase I clinical trial designs that have been proposed to estimate the MTD can be categorized into algorithm-based, model-based, and model-assisted designs \cite{yuan2019model}.

Among algorithm-based designs, the traditional 3+3 design \cite{storer1989design} has been widely used because of its simple dose-escalation rule. 
However, this design is known to achieve low accuracy in estimating the MTD mainly because of its memoryless dose-escalation rule, and many patients may be treated with subtherapeutic doses \cite{le2009dose}.
To improve the estimation efficiency, model-based designs, such as the continual reassessment method (CRM) \cite{o1990continual} and its extensions \cite{babb1998cancer,neuenschwander2008critical}, have been proposed.
These methods assume a parametric model for the dose-toxicity relationship, sharing information on toxicity across doses and allowing the selection of the recommended dose for the next patient cohort on the basis of all the toxicity data accumulated during the trial.
However, misspecification of the parametric model for the dose-toxicity relationship may lead to the estimation of the wrong dose as the MTD.
To ensure model robustness, model-assisted designs, such as the modified toxicity probability interval (mTPI) design \cite{ji2010modified}, the Keyboard design \cite{yan2017keyboard}, and the Bayesian optimal interval (BOIN) design \cite{liu2015bayesian}, have been proposed. 
By employing marginal conjugate models such as beta-binomial models for each dose without specifying the dose-toxicity curves across doses, these designs aim to provide a simple dose-escalation rule without compromising the performance in dose escalation by incorporating the toxicity data accumulated during the trial.
Meanwhile, owing to the absence of any model for information sharing across doses, separate MTD estimation, such as isotonic regressions, may be required to estimate the MTD at the end of the trial. 

Both model-based and model-assisted designs adaptively select the next dose on the basis of the accumulated data during the trial. 
These designs can be regarded as one in the more general framework of adaptive experimental design, which is a generic term for methods that efficiently make inferences about the properties of interest for unknown systems or unknown functions that are expensive to evaluate, by performing experiments in an adaptive manner \cite{Garud2017-zd,Liu2018-ql}. 
Recently, a dose-finding design was proposed on the basis of Bayesian optimization (BO), which is one of the methods of adaptive experimental design \cite{Takahashi2021-MTD,takahashi2021-MTDC,takahashi2021-OBD}. 
The goal of BO is to find the point that takes the maximum or minimum value of an unknown function in the fewest possible evaluations \cite{garnett_bayesoptbook_2023}. 
Takahashi and Suzuki \cite{Takahashi2021-MTD} proposed a dose-finding design in phase I cancer clinical trials using the BO algorithm, which will be hereafter referred to as the BO design.
Based on a Bayesian nonparametric model using a Gaussian process (GP) \cite{williams2006gaussian} for the dose-toxicity curve, the BO design seeks to identify a dose that minimizes the value of the objective function, defined as the absolute error between the DLT probability at each dose and the target threshold.
In contrast to many model-based and model-assisted designs with dose selection based on point estimation, the BO design incorporates the uncertainty of the posterior of the dose-toxicity curve across all the doses as well as some safety constraints in its dose-selection strategy.

In this paper, we propose a novel dose-finding design based on the level set estimation (LSE), namely LSE design. 
We show that the MTD estimation problem can be considered as an LSE problem that seeks to divide an input space into two regions where an unknown function value is above or below a given threshold.
To solve this problem efficiently, we use the framework of active learning for LSE \cite{Gotovos2013}, which has recently been applied to various fields such as environmental monitoring, psychophysics and materials engineering \cite{Gotovos2013,owen2021adaptive,hozumi2023adaptive}.
In this framework, it is possible to divide the input space into two regions with the fewest possible observations.
After dividing the dose space into two regions according to whether the toxicity probability is above or below the target threshold, we expect that the MTD lies near the boundary between these two regions.
On the basis of a nonparametric Gaussian process model across doses, we propose an acquisition function for the dose selection strategy that incorporates uncertainty in the posterior distribution of the dose-toxicity curve as well as overdose control.
In addition, we propose a prior specification for applying our LSE design to dose-finding trials.
Notably, in contrast to many previous dose-finding designs, the proposed LSE design provides an estimate of the MTD (as the boundary dose between the two regions) that may fall between the candidate doses to be tested.
This estimate may be useful not only for detecting incorrect dosing during the trial but also for determining the recommended dose for subsequent trials, especially when the experimental treatment dose can be adjusted.

In a standard LSE problem, the objective evaluation function (which corresponds to the dose-toxicity curve in this study) typically takes continuous values over the entire real number space, and theoretical guarantees of convergence have been established for this case. 
Meanwhile, as will be discussed in detail later, in the dose-finding problem, the output of the evaluation function is a probability value in the range of $[0,1]$, which makes it impossible to directly apply conventional theoretical evaluations. 
Therefore, we have demonstrated that in the dose-finding problem setting, the same type of convergence guarantee as that in conventional LSE can be established. 
This constitutes the theoretical contribution of our study.

The remainder of this paper is organized as follows. Section \ref{sec: Background} provides an overview of the proposed LSE framework for dose-finding trials.
Section \ref{sec: Theory} discusses the theoretical convergence guarantee of LSE in the dose-finding problem.
Section \ref{sec: Methods} presents the details of the proposed design, including the specification of the acquisition function and the prior specification. 
Section \ref{sec: Numerical results} describes simulations conducted to compare the performances of the proposed and existing designs. Finally, Section \ref{sec: Discussion} concludes the paper.

%=================================================================
\section{Overview of LSE for dose finding}\label{sec: Background}
%=================================================================

%---------------------------------------------------
\subsection{LSE}\label{sec: level set estimation}
%---------------------------------------------------
For an unknown function $f : \mathcal{X} \rightarrow \mathbb{R}$ and a given threshold $\theta \in \mathbb{R}$, LSE is a problem of dividing the input space $\mathcal{X}$ into the following two sets:
\begin{align*}
    \text{superlevel set}~~~~ H &= \{x\in\mathcal{X} \mid f(x) > \theta\},\\
    \text{sublevel set}~~~~   L &= \{x\in\mathcal{X} \mid f(x) \leq \theta\}.
\end{align*}
We consider a sequential design for estimating the superlevel/sublevel sets. 
At each step $t$, we select an evaluation point $x_t\in\mathcal{X}$ based on all the previous data and observe the output $y_t$.
In the case where the output is a continuous value, it is natural to assume a noisy output $y_t = f(x_t) + \eta_t$.

In situations where obtaining the output is expensive, it is desirable to reduce the number of observations of the data used to solve the LSE problem to the fewest possible.
Several methods that actively select the next evaluation point have been proposed to achieve this objective \cite{Bryan2005, Gotovos2013}.
These methods often assume that the unknown function is a sample from a Gaussian process.
This is because the Gaussian process allows $f$ to be modeled in a nonparametric manner and quantifies the uncertainty in the estimation, thereby facilitating efficient estimation of the level sets.
A Gaussian process is a collection of random variables, any finite subset of which follows a multivariate normal distribution \cite{williams2006gaussian}.
A Gaussian process is denoted by $\mathrm{GP}(m,k)$ and is specified by its mean function $m(x)$ and covariance function $k(x,x')$.
Suppose that $f$ is a sample from $\mathrm{GP}(m,k)$; then, any collection of function values $(f(x_1),\dots,f(x_n))^{\top}$ follows a multivariate normal distribution $N(\bm{m},\bm{K})$, where $\bm{m} = (m(x_1),\dots,m(x_n))^{\top}$ and $\bm{K}_{ij} = k(x_i,x_j)$.

Now, suppose that we assume a Gaussian process prior for $f$, i.e., $f \sim \mathrm{GP}(m,k)$, and that we have $n$ observations $y_t = f(x_t) + \eta_t$, where $\eta_t \sim N(0,\sigma^2)$ for $t=1,\dots,n$.
Then, for a new input $x_{\ast}$, the posterior distribution of $f(x_{\ast})$ becomes a normal distribution $f(x_{\ast}) \sim N(\mu_n(x_{\ast}), \sigma_n^2(x_{\ast}))$, where 
\begin{align}
\label{eq:posterior_mean}
\mu_n(x_{\ast})&=m(x_{\ast})+\bm{k}(x_{\ast})^{\top}\left(\bm{K}+\sigma^2\bm{I}_n\right)^{-1}(\bm{y}-\bm{m}),\\   
\label{eq:posterior_variance}
\sigma_n^2(x_{\ast})&=k(x_{\ast},x_{\ast})-\bm{k}(x_{\ast})^{\top}\left(\bm{K}+\sigma^2\bm{I}_n\right)^{-1}\bm{k}(x_{\ast}).
\end{align}
Furthermore, $\bm{k}(x_{\ast}) = (k(x_{\ast}, x_1), ..., k(x_{\ast}, x_n))^{\top}$, $\bm{I}_n$ is the $n$-dimensional identity matrix, and $\bm{y} = (y_1, ..., y_n)^{\top}$ is the observation vector. 
Using $\mu_n(x)$ and $\sigma_n^2(x)$, we can construct the credible interval for $f(x)$ as 
\begin{align}
    \label{eq:credible_interval}
    C_n(x) = [\mu_n(x) - \beta_n^{1/2}\sigma_n(x), \mu_n(x) + \beta_n^{1/2}\sigma_n(x)], 
\end{align}
where $\beta_n > 0$ is a hyperparameter that determines the width of the interval.

Then, we can construct a classification rule that determines whether each $x \in \mathcal{X}$ is classified into $H$ or $L$ as follows~\cite{Gotovos2013}:
\begin{align}
    \label{eq:classification_rule}
    x 
    \in 
    \begin{cases}
        H & \mbox{ if } \min C_n(x) + \xi = \mu_n(x) - \beta_n^{1/2}\sigma_n(x) + \xi > \theta, \\ 
        L & \mbox{ if } \max C_n(x) - \xi = \mu_n(x) + \beta_n^{1/2}\sigma_n(x) - \xi < \theta,  
    \end{cases}
\end{align}
where $\xi > 0$ is a hyperparameter that defines the margin for the classification rule.
If $x$ is not classified into either $H$ or $L$, it is classified into the undetermined set, $U$.
Then, the next evaluation point is determined to be the input point $x\in\mathcal{X}$ that maximizes the acquisition function $\alpha(x)$.
The details of the acquisition function will be discussed later.

Based on the above, the overall algorithm of LSE is constructed as shown in Algorithm~\ref{alg: ALSE algorithm}.

\begin{algorithm}[t]
\caption{LSE}
\label{alg: ALSE algorithm}
\begin{algorithmic}
\REQUIRE observed data $D_n$, a set of candidate points $\mathcal{X}$, and hyperparameters $\beta_n, \xi > 0$. 
\ENSURE estimated superlevel set $H$ and sublevel set $L$
\STATE {\bfseries Initialize:} $H, L \leftarrow \emptyset$, $U \leftarrow \mathcal{X}$
\WHILE{$U \neq \emptyset$}
\STATE{\bfseries Step~1} Construct the credible interval $C_n(x)$ using \eqref{eq:posterior_mean}–\eqref{eq:credible_interval}.
\STATE{\bfseries Step~2} 
Based on \eqref{eq:classification_rule}, each candidate point $x \in \mathcal{X}$ is classified into either the superlevel set $H$ or the sublevel set $L$. 
Inputs $x$ that do not satisfy the condition of \eqref{eq:classification_rule} are classified into the undetermined set $U$. 
\STATE{\bfseries Step~3} Select the next evaluation point by maximizing the acquisition function as $x_{n+1} = \argmax_{x\in \mathcal{X}} \alpha_n(x)$ and observe $y_{n+1} = f(x_{n+1}) + \eta_{n+1}$, $\eta_{n+1} \sim N(0, \sigma^2)$. 
\STATE{\bfseries Step~4} Update the dataset as $D_{n+1} \leftarrow D_n \cup \{(x_{n+1}, y_{n+1})\}$ and $n \leftarrow n+1$. 
\ENDWHILE
\end{algorithmic}
\end{algorithm}

%-------------------------------------------
\subsection{Dose finding based on LSE}
%-------------------------------------------
In this section, we first show that dose-finding trials can be formulated as an LSE problem and then outline our dose-finding design based on the LSE framework.
In the context of dose-finding trials, we define $\pi(x)$ as an unknown function of the toxicity probability of a DLT occurring at dose $x\in\mathcal{X}\subset\mathbb{R}$.
As the toxicity increases with the dose, $\pi(x)$ is assumed to be a monotonically increasing function.
For the target DLT probability $\theta$ as a threshold, we consider the LSE problem in which the objective is to divide the dose space into a sublevel set $L = \{x\in\mathcal{X} \mid \pi(x) \leq \theta\}$ and a superlevel set $H = \{x\in\mathcal{X} \mid \pi(x) > \theta\}$.
If there exists a dose $x^*\in\mathcal{X}$ such that $\pi(x^*)=\theta$, then under the monotonicity  assumption of $\pi(x)$, the dose space is divided into $L$ and $H$ with $x^* = \max_{x\in\mathcal{X}} L$ as the boundary dose (Figure \ref{fig: LSE}).
The dose $x^*$ is the MTD that we try to estimate; thus, we can estimate the MTD by solving the aforementioned LSE problem.

Although only a limited number of discrete candidate doses can be investigated in the clinical trial, estimating $L$ and $H$ enables us to estimate the MTD (that is not necessarily one of the discrete candidate doses) and select a recommended dose from these discrete candidate doses.
Various criteria can be considered for selecting a recommended dose.
For example, a recommended dose could be selected as the dose that takes the highest posterior probability $\mathrm{Pr}(|\pi(d)-\theta|\leq\delta \mid \text{data})$, where $\delta>0$ is a small margin, between the highest dose in the estimated sublevel set and the lowest dose in the estimated superlevel set.
Alternatively, with a conservative policy, it may be better to select the highest dose in the estimated sublevel set as a recommended dose for the MTD.

\begin{figure}[htbp]
\begin{center}
\includegraphics[width=0.47\linewidth]{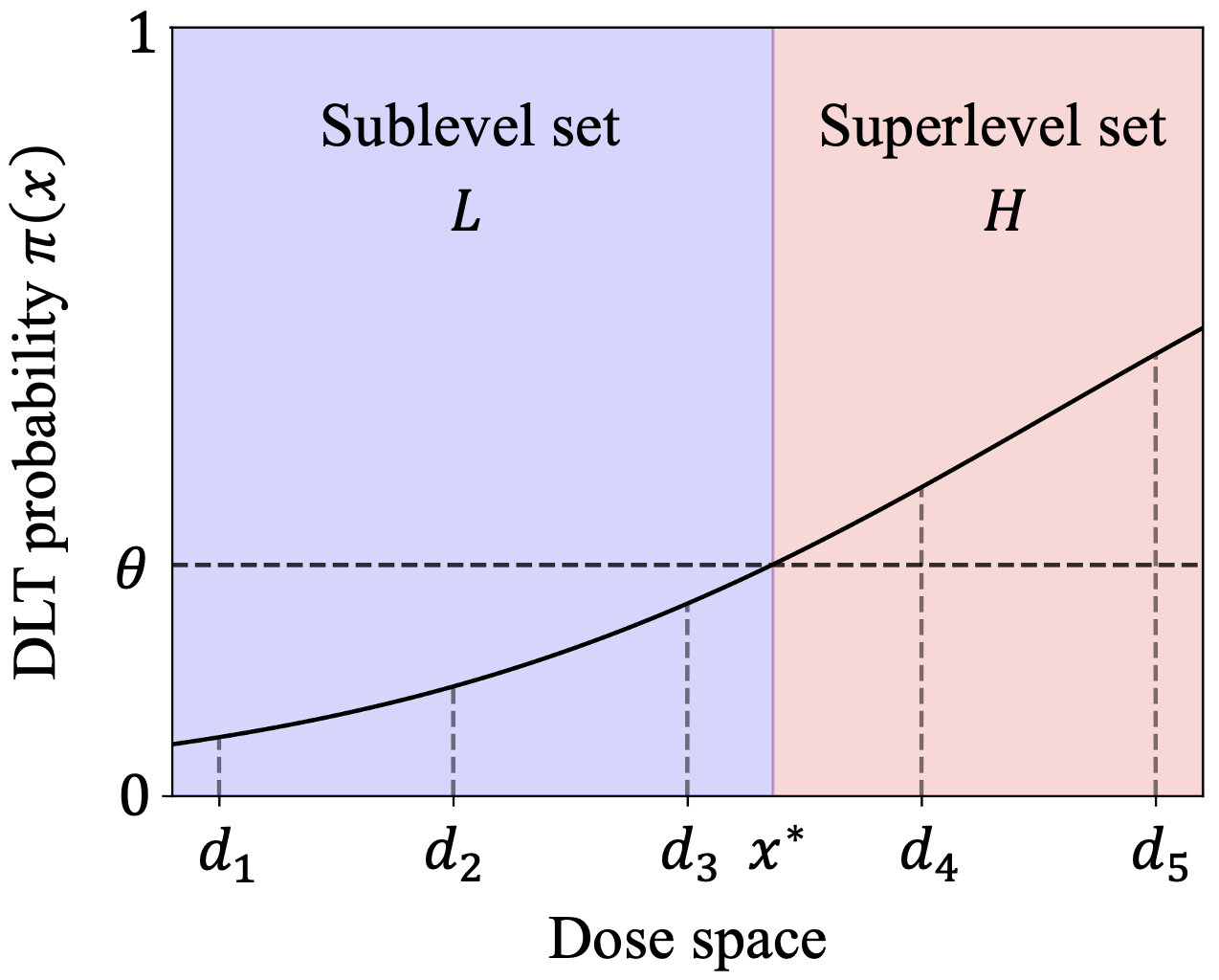}
\caption{Illustration of LSE for dose finding with five candidate doses. The solid curve represents the DLT probability $\pi(x)$, and the dotted horizontal line represents the threshold, i.e., the target DLT probability $\theta$. Under the monotonicity assumption for $\pi(x)$, the dose space is divided into $L$ and $H$ by the dose $x^* = \max_{x\in\mathcal{X}} L$ as the boundary.}
\label{fig: LSE}
\end{center}
\end{figure}

%=======================================================================================
\section{Theoretical justification of LSE for dose-finding design}\label{sec: Theory}
%=======================================================================================
In conventional LSE, the output of the black-box function is assumed to be a continuous quantity that can take any value on $\mathbb{R}$, and a theoretical convergence analysis based on direct modeling using Gaussian processes has been proposed~\cite{Gotovos2013}. 
Meanwhile, the output of the black-box function $\pi$ considered in this study is a probability that takes values in $[0, 1]$. 
Moreover, the observable output is a binary variable that indicates whether toxicity manifests in a patient at a given dose, and the value of $\pi$ is not provided explicitly. 
Therefore, it is necessary to incorporate the clinical trial design after confirming that the theory of conventional LSE also holds in our setting. 

In what follows, we present theoretical results for the LSE problem in the setting of dose-finding trials that are analogous to those obtained in the conventional LSE setting. 
Specifically, we guarantee that the LSE algorithm always stops under a finite sample size and that the estimated level sets obtained when the LSE algorithm stops approximate the true level set sufficiently well.
%

%------------------------------
\subsection{Preliminaries}
%------------------------------
We consider the problem of estimating the level set of a fixed DLT probability function $\pi : \mathcal{X} \rightarrow [0, 1]$ by a threshold $\theta$, where $\mathcal{X} \subset \mathbb{R}$ is a compact set.  
%
%At each round $t$, of clinical trial,
For the $i$-th patient, a dose $x_i \in \mathcal{X}$ is chosen; then, the DLT outcome (binary label) is observed as $y_i \sim \mathrm{Bernoulli}(\pi(x_i))$, $i = 1, ..., n$. 
The model for observation $y_i$ can be interpreted as a noisy observation of $\pi$ as follows:
\begin{align}
    \label{eq:sub_gaussian_noise}
    y_i = \pi(x_i) + \varepsilon_i(x_i), 
    ~~~
    \varepsilon_i(x_i)
    = \begin{cases}
    1 - \pi(x_i) & \mbox{with probability} ~ \pi(x_i) \\ 
    - \pi(x_i) & \mbox{with probability} ~ 1 - \pi(x_i)
    \end{cases}
    .
\end{align}
Since $\varepsilon_i(x_i) \in [- \pi(x_i), 1 - \pi(x_i)]$, it is bounded; thus, given $x_i$, it is a conditionally sub-Gaussian random variable with a variance proxy of $1$. 

\paragraph{Regularity assumptions.}
Suppose that there exists an unknown function $\tilde{\pi} : \mathcal{X} \rightarrow [-0.5, 0.5]$ such that $\pi(x) = \tilde{\pi}(x) + 0.5$ for any $x \in \mathcal{X}$. 
Then, we can consider the modified outcome $\tilde{y}_i = y_i - 0.5$ for $\tilde{\pi}$. 
We  assume that $\mathcal{X}$ is endowed with a positive definite kernel $k:\mathcal{X} \times \mathcal{X} \rightarrow \mathbb{R}$ that is normalized to satisfy $k(x, x^{\prime}) \le 1$ for any $x, x^{\prime} \in \mathcal{X}$. 
We also assume that $\tilde{\pi}$ has a bounded norm in the corresponding reproducing kernel Hilbert space (RKHS) $\mathcal{H}_k(\mathcal{X})$, i.e., $\tilde{\pi} \in \mathcal{H}_k(\mathcal{X})$ and there exists a constant $B_{\tilde{\pi}} > 0$ such that $\|\tilde{\pi}\|_k \le B_{\tilde{\pi}}$ holds. 

%-----------------------------------------------------------------
\subsection{Gaussian process surrogate model for $\tilde{\pi}$}
%-----------------------------------------------------------------
In the following, data with the modified outcome up to the $n$-th patient are denoted as $\tilde{D}_n = \{(x_i, \tilde{y}_i)\}_{i=1}^n$.
In standard LSE settings~\cite{Gotovos2013}, Bayesian inference is often performed by assuming Gaussian noise $\tilde{\varepsilon}_i = \tilde{y}_i - \tilde{\pi}(x_i)$ and introducing a Gaussian process prior with mean $0$ and a positive definite kernel $k$ for $\tilde{\pi}$. 
With such a surrogate model, given $n$ input-output pairs $\tilde{D}_n$, the posterior distribution for $\tilde{\pi}$ is also Gaussian with mean and variance 
\begin{align}
    \label{eq:gaussian_process_posterior}
    \begin{aligned}
    \tilde{\mu}_n(x) &= \bm{k}_n(x)^{\top}(\bm{K}_n + \lambda \bm{I}_n)^{-1} \tilde{\bm{y}}_n, \\ 
    \tilde{\sigma}_n^2(x) &= k(x, x) - \bm{k}_n(x)^{\top}(\bm{K}_n + \lambda \bm{I}_n)^{-1}\bm{k}_n(x),
    \end{aligned}
\end{align}
where $\bm{k}_n(x) = (k(x_1, x), ..., k(x_n, x))^{\top}$ and $\bm{K}_n = (k(x_i, x_j))_{i, j}$ is the kernel matrix. 
Furthermore, $\tilde{\bm{y}}_n = (\tilde{y}_1, ..., \tilde{y}_n)^{\top}$ is the vector of outcomes from $\tilde{\pi}$. 
Note that by definition, $\tilde{\mu}_n \in \mathcal{H}_k(\mathcal{X})$ always holds; hence, we can use $\tilde{\mu}_n$ for a good approximation of $\tilde{\pi}$. 
In this model, $\lambda > 0$ is simply a hyperparameter that is not necessarily related to the true observation noise~\cite{oliveira2019bayesian}. 

\subsection{From Gaussian process surrogate model to confidence bound for $\tilde{\pi}$}
In our setting, $\tilde{\pi}$ is an element of RKHS $\mathcal{H}_k(\mathcal{X})$ and we do not assume that it truly follows a Gaussian process. 
Nevertheless, the following result of Bogunovic et al.~\cite{bogunovic2020corruption} provides a valid confidence bound for $\tilde{\pi}$ using the same mean and variance in the surrogate GP model \eqref{eq:gaussian_process_posterior}. 

\begin{lemma}[Lemma~1 in Bogunovic et al.~\cite{bogunovic2020corruption}]
    \label{lemma:confidence_bound_for_tilde_q}
    Let $\tilde{\pi} \in \mathcal{H}_k(\mathcal{X})$ with $\|\tilde{\pi}\|_k \le B_{\tilde{\pi}}$ and $\tilde{y}_i = \tilde{\pi}(x_i) + \tilde{\varepsilon}_i(x_i)$ be the model of observations where $\tilde{\varepsilon}_i(x_i)$ is a conditionally sub-Gaussian random variable with a variance proxy of $1$ given $x_i$. 
    If we set 
    \begin{align}
        \label{eq:confidence_bound_coefficient}
        \tilde{\beta}_n^{1/2} = B_{\tilde{\pi}} + 0.5\lambda^{-1/2} \sqrt{2(\gamma_{n-1} + \ln(1/\delta))}
    \end{align}
    where 
    \begin{align}
        \label{eq:maximum_information_gain}
        \gamma_n = \max_{(x_1, ..., x_n) \subset \mathcal{X}} \frac{1}{2} \ln \det (\bm{I}_n + \lambda^{-1}\bm{K}_n)
    \end{align}
    is the maximum information gain~\cite{srinivas2010gaussian}, then the following holds with a probability of at least $1 - \delta$:
    \begin{align}
        \label{eq:confidence_bound_for_tilde_pi}
        |\tilde{\mu}_{n-1}(x) - \tilde{\pi}(x)| \le \tilde{\beta}_n^{1/2} \tilde{\sigma}_{n-1}(x), ~~~
        \forall x \in \mathcal{X}, \forall n \ge 1, 
    \end{align}
    where $\tilde{\mu}_{n-1}$ and $\tilde{\sigma}_{n-1}$ are given in the virtual GP posterior \eqref{eq:gaussian_process_posterior}. 
\end{lemma}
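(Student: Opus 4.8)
The plan is to exploit the fact that, although $\tilde{\pi}$ is only assumed to lie in the RKHS $\mathcal{H}_k(\mathcal{X})$ rather than to be a genuine sample path of a Gaussian process, the posterior mean $\tilde{\mu}_{n-1}$ in \eqref{eq:gaussian_process_posterior} coincides with the kernel ridge regression estimator with regularization $\lambda$. This lets me recast the problem as a regression problem in feature space and invoke self-normalized martingale concentration. Concretely, I would introduce a (possibly formal) feature map $\phi : \mathcal{X} \to \mathcal{H}$ with $k(x,x') = \langle \phi(x), \phi(x')\rangle_{\mathcal{H}}$, so that $\tilde{\pi}(x) = \langle \theta^\ast, \phi(x)\rangle_{\mathcal{H}}$ for some $\theta^\ast \in \mathcal{H}$ with $\|\theta^\ast\|_{\mathcal{H}} = \|\tilde{\pi}\|_k \le B_{\tilde{\pi}}$, and rewrite $\tilde{\mu}_{n-1}(x) = \langle \hat{\theta}_{n-1}, \phi(x)\rangle_{\mathcal{H}}$ for the ridge estimator $\hat{\theta}_{n-1} = (\bm{\Phi}_{n-1}^{\top}\bm{\Phi}_{n-1} + \lambda \bm{I})^{-1}\bm{\Phi}_{n-1}^{\top} \tilde{\bm{y}}_{n-1}$, where $\bm{\Phi}_{n-1}$ stacks the features $\phi(x_1),\dots,\phi(x_{n-1})$.

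The next step is the error decomposition. Writing $\bm{V}_{n-1} = \bm{\Phi}_{n-1}^{\top}\bm{\Phi}_{n-1} + \lambda \bm{I}$ and $\tilde{\bm{y}}_{n-1} = \bm{\Phi}_{n-1}\theta^\ast + \bm{\varepsilon}_{n-1}$, a routine manipulation gives
\[
\tilde{\mu}_{n-1}(x) - \tilde{\pi}(x) = \langle \phi(x),\, \bm{V}_{n-1}^{-1}\bm{\Phi}_{n-1}^{\top} \bm{\varepsilon}_{n-1}\rangle - \lambda\,\langle \phi(x),\, \bm{V}_{n-1}^{-1}\theta^\ast\rangle,
\]
where $\bm{\varepsilon}_{n-1}$ collects the centered noise terms $\tilde{\varepsilon}_i$. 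Applying Cauchy--Schwarz in the inner product induced by $\bm{V}_{n-1}^{-1}$ and using the standard primal--dual identity $\tilde{\sigma}_{n-1}^2(x) = \lambda\,\phi(x)^{\top} \bm{V}_{n-1}^{-1}\phi(x)$, which relates the GP posterior variance \eqref{eq:gaussian_process_posterior} to the ridge design matrix, I would bound the deterministic second term by $\sqrt{\lambda}\,\|\theta^\ast\|_{\mathcal{H}}\cdot\lambda^{-1/2}\tilde{\sigma}_{n-1}(x) \le B_{\tilde{\pi}}\,\tilde{\sigma}_{n-1}(x)$. This accounts for the $B_{\tilde{\pi}}$ summand in $\tilde{\beta}_n^{1/2}$.

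The main obstacle is controlling the stochastic term $\|\bm{\Phi}_{n-1}^{\top}\bm{\varepsilon}_{n-1}\|_{\bm{V}_{n-1}^{-1}}$ uniformly over all $n \ge 1$ and all $x \in \mathcal{X}$ simultaneously. Here I would use that $(\tilde{\varepsilon}_i)$ is a conditionally sub-Gaussian martingale difference sequence and apply the self-normalized tail inequality of Abbasi-Yadkori et al., in the RKHS form of Chowdhury and Gopalan: via the method of mixtures (a Laplace-type argument on an exponential supermartingale) one obtains, with probability at least $1-\delta$ and for all $n$ at once, $\|\bm{\Phi}_{n-1}^{\top}\bm{\varepsilon}_{n-1}\|_{\bm{V}_{n-1}^{-1}} \le 0.5\sqrt{2\ln\!\big(\det(\bm{I}+\lambda^{-1}\bm{K}_{n-1})^{1/2}/\delta\big)}$. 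Bounding the log-determinant by the maximum information gain, $\tfrac{1}{2}\ln\det(\bm{I}+\lambda^{-1}\bm{K}_{n-1}) \le \gamma_{n-1}$ from \eqref{eq:maximum_information_gain}, and combining with the Cauchy--Schwarz step yields the noise contribution $0.5\,\lambda^{-1/2}\sqrt{2(\gamma_{n-1}+\ln(1/\delta))}\,\tilde{\sigma}_{n-1}(x)$; the scale $0.5$ is exactly the sub-Gaussian parameter of the centered Bernoulli noise $\tilde{\varepsilon}_i$, whose range has length one. Adding the two contributions produces $\tilde{\beta}_n^{1/2}\tilde{\sigma}_{n-1}(x)$ with $\tilde{\beta}_n^{1/2}$ as in \eqref{eq:confidence_bound_coefficient}. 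I emphasize that the uniformity in $x$ requires no union bound, since $\phi(x)$ is merely paired against a single vector-valued martingale, while the uniformity in $n$ is built into the mixture construction. Since the statement is precisely Lemma~1 of Bogunovic et al., an alternative is to cite it directly after verifying that our hypotheses---bounded RKHS norm $\|\tilde{\pi}\|_k \le B_{\tilde{\pi}}$ and conditionally sub-Gaussian noise with bounded variance proxy---match theirs.
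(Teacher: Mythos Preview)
Your proof sketch is correct and follows the standard route to such RKHS confidence bounds (feature-map/ridge-regression rewriting, bias--variance split, Cauchy--Schwarz in the $\bm{V}_{n-1}^{-1}$-norm, the self-normalized martingale bound of Abbasi-Yadkori et al.\ in its kernelized form, and the log-determinant--to--information-gain replacement). However, the paper does not actually prove this lemma: it is stated verbatim as Lemma~1 of Bogunovic et al.\ and simply invoked as an external result, with the only work being the observation that the centered Bernoulli noise is bounded and hence conditionally sub-Gaussian. So your proposal is not merely a different route---it is a full argument where the paper offers only a citation. Your final remark, that one can cite the result directly after checking the hypotheses, is precisely what the paper does; if you want to match the paper, that single sentence suffices.
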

Then, we can derive the valid confidence bound for $\pi$ directly from Lemma~\ref{lemma:confidence_bound_for_tilde_q}. 
\begin{corollary}
    \label{corollary:confidence_bound_for_pi}
    Under the same assumptions as those in Lemma~\ref{lemma:confidence_bound_for_tilde_q}, the following holds with a probability of at least $1 - \delta$:
    \begin{align}
        \label{eq:confidence_bound_for_pi}
        |\mu_{n-1}(x) - \pi(x)| \le \tilde{\beta}_n^{1/2} \tilde{\sigma}_{n-1}(x), ~~~
        \forall x \in \mathcal{X}, \forall n \ge 1, 
    \end{align}
    where $\mu_{n-1}(x) = \tilde{\mu}_{n-1}(x) + 0.5$. 
\end{corollary}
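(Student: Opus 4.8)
The plan is to exploit the fact that $\pi = \tilde{\pi} + 0.5$ is a pure translation by a constant, so the estimation error is invariant under the shift and the corollary reduces to a one-line substitution into Lemma~\ref{lemma:confidence_bound_for_tilde_q}. First I would record that the two quantities appearing in the corollary differ from their tilde counterparts only by the additive constant $0.5$: by the regularity assumption $\pi(x) = \tilde{\pi}(x) + 0.5$, and by the definition in the statement $\mu_{n-1}(x) = \tilde{\mu}_{n-1}(x) + 0.5$. Subtracting, the constants cancel and I obtain the pointwise identity $\mu_{n-1}(x) - \pi(x) = \tilde{\mu}_{n-1}(x) - \tilde{\pi}(x)$ for every $x \in \mathcal{X}$ and every $n \ge 1$.

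Next I would observe that the posterior standard deviation $\tilde{\sigma}_{n-1}(x)$ is unchanged by the shift, since the variance in \eqref{eq:gaussian_process_posterior} depends only on the evaluation points $x_1, \dots, x_{n-1}$ and the kernel $k$, not on the observed outputs. Consequently the right-hand side of the target bound \eqref{eq:confidence_bound_for_pi} is literally identical to that of \eqref{eq:confidence_bound_for_tilde_pi}, including the coefficient $\tilde{\beta}_n^{1/2}$.

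The remaining step is simply to invoke Lemma~\ref{lemma:confidence_bound_for_tilde_q}, whose hypotheses hold under the stated assumptions: $\tilde{\pi} \in \mathcal{H}_k(\mathcal{X})$ with $\|\tilde{\pi}\|_k \le B_{\tilde{\pi}}$ by the regularity assumption, and the modified noise $\tilde{\varepsilon}_i(x_i) = \tilde{y}_i - \tilde{\pi}(x_i) = y_i - \pi(x_i) = \varepsilon_i(x_i)$ coincides with the original noise, which is conditionally sub-Gaussian with variance proxy $1$ by \eqref{eq:sub_gaussian_noise}. Taking absolute values in the identity from the first step and applying the high-probability bound \eqref{eq:confidence_bound_for_tilde_pi} then yields \eqref{eq:confidence_bound_for_pi} on the same event of probability at least $1 - \delta$.

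There is no real obstacle here; the only points requiring care are bookkeeping ones: confirming that the center $\mu_{n-1}$ and the target $\pi$ shift by exactly the same constant (so the error is translation-invariant), and that the posterior variance, and hence the entire right-hand side and the failure probability, is untouched by the shift. Once these are checked, the corollary follows immediately from Lemma~\ref{lemma:confidence_bound_for_tilde_q}.
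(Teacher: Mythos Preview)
Your proposal is correct and matches the paper's approach: the paper simply states that the corollary follows directly from Lemma~\ref{lemma:confidence_bound_for_tilde_q}, and your argument is exactly the straightforward translation bookkeeping that makes this direct implication explicit. There is nothing to add.
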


%------------------------------------------------------
\subsection{Convergence analysis for LSE algorithm}
%------------------------------------------------------

\subsubsection{LSE algorithm}\label{sec: Theory ALSE algorithm}
Now, we are ready to discuss the convergence of the LSE algorithm (Algorithm~\ref{alg: ALSE algorithm}) for $\pi$. 
We begin by rewriting Algorithm~\ref{alg: ALSE algorithm} in the terminology of our setting.

\paragraph{Confidence-bound-based classification of the level sets.} 
All candidate points $x \in \mathcal{X}$ are classified for each iteration into the superlevel set $H$, the sublevel set $L$, or the undetermined set $U$. 
The classification rule is based on the confidence bound for $\pi$. 
Let $\mathrm{ucb}_{n}(x) = \mu_{n-1}(x) + \tilde{\beta}_n^{1/2} \tilde{\sigma}_{n-1}(x)$ and $\mathrm{lcb}_{n}(x) = \mu_{n-1}(x) - \tilde{\beta}_n^{1/2} \tilde{\sigma}_{n-1}(x)$ be the upper and lower confidence bounds for $\pi$, respectively (given in Corollary~\ref{corollary:confidence_bound_for_pi}). 
In each iteration, if $x \in \mathcal{X}$ satisfies $\mathrm{ucb}_{n}(x) < \theta + \xi$, then it is classified into the sublevel set $L$ because the DLT probability $\pi(x)$ is below the threshold $\theta$ with a high confidence. 
Here, $\xi$ is a non-negative real number representing the margin for classification. 
Similarly, if $x$ satisfies $\mathrm{lcb}_{n}(x) > \theta - \xi$, it is classified into the superlevel set $H$. 
Otherwise, $x$ is assigned to the undetermined set $U$ as it cannot be classified from this information alone. 
Finally, if the undetermined set becomes an empty set in an iteration, namely $U = \emptyset$, we terminate the LSE algorithm.

\paragraph{Confidence-bound-based acquisition function.} 
Next, we introduce the acquisition function commonly used in the standard LSE.
To specify the next dose, we can use the following confidence-bound-based acquisition function:
\begin{align}
    \label{eq:confidence_bound-based_acquisition}
    \alpha_n(x) = \min\{\mathrm{ucb}_{n}(x) - \theta, \theta - \mathrm{lcb}_{n}(x) \}. 
\end{align}
This type of acquisition function is referred to as {\it classification ambiguity}~\cite{Gotovos2013}; it is also known as the {\it straddle function}, especially when $\tilde{\beta}_n^{1/2}=1.96$~\cite{Bryan2005}. 
Then, we determine the next dose as $x_{n} = \argmax_{x \in \mathcal{X}} \alpha_n(x)$. 

\subsubsection{Convergence analysis for LSE algorithm}
To discuss the convergence of the LSE algorithm, we introduce the misclassification loss as follows. 
\begin{definition}
    Let $\hat{L}$ and $\hat{H}$ be the sublevel set and the superlevel set estimated by the LSE algorithm, respectively.
    For each $x \in \mathcal{X}$, we define the misclassification loss at the end of the LSE algorithm as 
    \begin{align}
        \label{eq:misclassification_loss}
        \mathcal{L}_{\theta}(x) = \begin{cases}
                                \max\{0, \pi(x) - \theta \} & \mbox{if}~~ x \in \hat{L}, \\
                                \max\{0, \theta - \pi(x)\} & \mbox{if}~~ x \in \hat{H}. 
                                \end{cases}
    \end{align}         
\end{definition}
With the aforementioned preparations, we have the following theorem for the convergence of the LSE algorithm. 
\begin{theorem}
    \label{theorem:main_theorem}
    Suppose that the regularity assumptions hold. 
    For all $\theta \in [0, 1]$, $\delta \in (0, 1)$, $\xi > 0$, if $\tilde{\beta}_n$ is set as in \eqref{eq:confidence_bound_coefficient}, then the following two statements hold. 
    \begin{enumerate}
        \item The LSE algorithm terminates after at most $N$ observations, where $N$ is the smallest positive integer satisfying 
            \begin{align}
                \label{eq:ALSE_termination}
                \frac{N}{\tilde{\beta}_{N} \gamma_{N}} \ge \frac{C_1 }{\xi^2}
            \end{align}
            where $C_1 = 8/\log(1 + \sigma^{-2})$. 
        \item With a probability of at least $1 - \delta$, the misclassification loss at the end of the LSE algorithm is less than $\xi$, i.e., 
            \begin{align}
                \label{eq:misclassification_loss_bound}
                P\left(\max_{x \in \mathcal{X}} \mathcal{L}_{\theta}(x) \le \xi \right) \ge 1 - \delta. 
            \end{align}
    \end{enumerate}
\end{theorem}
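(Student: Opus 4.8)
The plan is to prove the two statements separately. Throughout the correctness claim (statement~2) I would condition on the event $\mathcal{E}$ on which the uniform confidence bound of Corollary~\ref{corollary:confidence_bound_for_pi} holds, recalling that $P(\mathcal{E}) \ge 1-\delta$ by construction. The termination claim (statement~1), by contrast, I expect to hold \emph{deterministically}: it invokes only the definitions of the classification rule, the acquisition function, and the surrogate posterior variance $\tilde{\sigma}_{n-1}$, none of which depends on the confidence bound actually being correct. This matches the way the theorem is phrased, with the probability attached only to \eqref{eq:misclassification_loss_bound}.

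For statement~2 I would argue pointwise. When the algorithm stops, $U=\emptyset$, so every $x\in\mathcal{X}$ was placed in $\hat{L}$ or $\hat{H}$ at some iteration $m=m(x)$. If $x\in\hat{L}$, the rule gives $\mathrm{ucb}_{m}(x)<\theta+\xi$; on $\mathcal{E}$, Corollary~\ref{corollary:confidence_bound_for_pi} yields $\pi(x)\le\mu_{m-1}(x)+\tilde{\beta}_m^{1/2}\tilde{\sigma}_{m-1}(x)=\mathrm{ucb}_{m}(x)<\theta+\xi$, whence $\mathcal{L}_{\theta}(x)=\max\{0,\pi(x)-\theta\}<\xi$. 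The case $x\in\hat{H}$ is symmetric, using $\mathrm{lcb}_{m}(x)>\theta-\xi$ and $\pi(x)\ge\mathrm{lcb}_{m}(x)$. Taking the supremum over $x$ gives $\max_{x}\mathcal{L}_{\theta}(x)\le\xi$ on $\mathcal{E}$, establishing \eqref{eq:misclassification_loss_bound}.

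For statement~1 the key observation is that every still-undetermined point carries large ambiguity. If $x\in U$ at iteration $n$, then neither classification condition fired, i.e. $\mathrm{ucb}_{n}(x)\ge\theta+\xi$ and $\mathrm{lcb}_{n}(x)\le\theta-\xi$, so $\alpha_n(x)=\min\{\mathrm{ucb}_{n}(x)-\theta,\ \theta-\mathrm{lcb}_{n}(x)\}\ge\xi$. Since $x_n$ maximizes $\alpha_n$ and $U\neq\emptyset$ before termination, $\alpha_n(x_n)\ge\xi$. On the other hand, directly from the definitions, $\alpha_n(x_n)\le\mathrm{ucb}_{n}(x_n)-\mathrm{lcb}_{n}(x_n)=2\tilde{\beta}_n^{1/2}\tilde{\sigma}_{n-1}(x_n)$, so $\tilde{\sigma}_{n-1}^2(x_n)\ge\xi^2/(4\tilde{\beta}_n)$. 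I would then sum these lower bounds against the maximum-information-gain inequality: using $\tilde{\sigma}_{n-1}^2(x)\le k(x,x)\le1$ together with the elementary inequality $s\le\frac{1}{\log(1+\sigma^{-2})}\log(1+\sigma^{-2}s)$ valid for $s\in[0,1]$ (where $\sigma^2$ is the GP noise hyperparameter $\lambda$), the standard argument of Srinivas et al.\ gives $\sum_{n=1}^{N}\tilde{\sigma}_{n-1}^2(x_n)\le\frac{2}{\log(1+\sigma^{-2})}\gamma_N$. Combining these with the monotonicity of $\tilde{\beta}_n$ (so $\tilde{\beta}_n\le\tilde{\beta}_N$ for $n\le N$), if the algorithm has \emph{not} terminated by step $N$ then $\frac{N\xi^2}{4\tilde{\beta}_N}\le\frac{2\gamma_N}{\log(1+\sigma^{-2})}$, i.e. $\frac{N}{\tilde{\beta}_N\gamma_N}\le\frac{C_1}{\xi^2}$ with $C_1=8/\log(1+\sigma^{-2})$. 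Contraposing, termination must occur no later than the smallest $N$ satisfying \eqref{eq:ALSE_termination}.

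I expect the only genuinely new difficulty relative to the standard LSE analysis to lie upstream of this theorem: because the observed outcome is binary and $\pi$ is bounded rather than a GP sample corrupted by Gaussian noise, the usual confidence band is not available off the shelf. That gap is exactly what Corollary~\ref{corollary:confidence_bound_for_pi} closes, via the sub-Gaussian-with-variance-proxy-$1$ structure of $\varepsilon_i$ in \eqref{eq:sub_gaussian_noise} and Lemma~\ref{lemma:confidence_bound_for_tilde_q}. Once that uniform bound is in hand, both claims reduce to classical information-gain bookkeeping; the care required is mostly in tracking constants and in ensuring that the quantities $\tilde{\mu}_{n-1},\tilde{\sigma}_{n-1}$ appearing in the classifier are the \emph{surrogate}-GP quantities of \eqref{eq:gaussian_process_posterior}, to which the information-gain bound through \eqref{eq:maximum_information_gain} directly applies.
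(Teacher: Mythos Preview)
Your proposal is correct and follows essentially the same route as the paper: the termination argument is the standard information-gain bookkeeping (your bound $\alpha_n(x_n)\le\mathrm{ucb}_n-\mathrm{lcb}_n=2\tilde{\beta}_n^{1/2}\tilde{\sigma}_{n-1}$ is a factor-of-two looser than the paper's Lemma~\ref{lemma:acquisition_bound}, $\alpha_n(x_n)\le\tilde{\beta}_n^{1/2}\tilde{\sigma}_{n-1}$, but it lands exactly on the stated constant $C_1=8/\log(1+\sigma^{-2})$), and your ``$x\in U\Rightarrow\alpha_n(x)\ge\xi$'' is precisely the contrapositive of the paper's Lemma~\ref{lemma:undetermined_set}. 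Your explicit treatment of statement~2 via Corollary~\ref{corollary:confidence_bound_for_pi} on the event $\mathcal{E}$ is in fact more complete than the paper's appendix, which only spells out the termination part and leaves the misclassification-loss bound implicit.
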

The proof of Theorem~\ref{theorem:main_theorem} is provided in Appendix~\ref{appendix: proof of theorem}. 
Note that for standard covariance functions such as the squared exponential kernel or the Mat\'ern kernel, it is known that the maximum information gain $\gamma_N$ has a sublinear order~\cite{srinivas2010gaussian, vakili2021information}. 
In particular, the order of $\gamma_N$ for the squared exponential kernel \eqref{eq: kernel} used in this study is known to be $O((\log(N))^{d+1})$, where $d$ denotes the dimension of the input. 
In our problem setting, which is the dose-finding problem, we assume that $d = 1$; thus, we have $\gamma_N = O((\log(N))^2)$. 
Consequently, $\gamma_N^2$ is also of sublinear order.
This fact, along with the statement of Theorem~\ref{theorem:main_theorem}, ensures that the LSE algorithm terminates within a finite number of trials.
From the above, we have confirmed that the theoretical convergence of LSE can be guaranteed in the setting of the dose-finding problem.

%=================================================================
\section{Dose-finding methods based on LSE}\label{sec: Methods}
%=================================================================
In this section, we describe the details of the proposed clinical trial design based on LSE. 
In our approach, instead of directly using Algorithm~\ref{alg: ALSE algorithm} as is, we introduce modifications to address the specific settings and constraints of phase I oncology clinical trials.
This section is structured as follows.
Section \ref{sec: model} describes how to construct the model for $\pi$ using a Gaussian process. 
Section \ref{sec: Establishing a Gaussian process prior} and \ref{sec: Dose-finding strategy} present the proposed prior specification and the proposed acquisition function, respectively, for constructing our LSE design in Section \ref{sec: trial design}.
%

%---------------------------------------------------
\subsection{Dose-toxicity model}\label{sec: model}
%---------------------------------------------------
Suppose that we have $J$ prespecified discrete doses denoted by $d_1<\dots<d_J$.
We assume that these doses are scaled equally on the interval $\mathcal{X} = [0,1]$ to simplify the construction of the model, although adaptation to unequally scaled doses is straightforward. For example, if we have $J=5$ doses, the scaled doses are $(d_1, d_2, d_3, d_4, d_5) = (0.00, 0.25, 0.50, 0.75, 1.00)$.
Let $Y$ be a binary DLT outcome from a particular patient, such that $Y=1$ represents the expression of DLT and $Y=0$ otherwise.
Hence, $\pi(x)$ can be given by $\pi(x) = \mathrm{Pr}(Y=1 \mid x)$.
For the data generating model on the dose-toxicity relationship, we assume the same model as that in the BO design.
Specifically, we assume the following latent Gaussian process model \cite{williams2006gaussian}:
\begin{gather}
    \mathrm{logit}(\pi(x)) = f(x),\label{eq: model} \\
    f \sim \mathrm{GP}(m,k),\label{eq: GP}
\end{gather}
where $f :\mathcal{X} \rightarrow \mathbb{R}$ is a latent function.
Note that $\pi(x)$ is a function whose value range is restricted from 0 to 1; hence, it cannot be modeled directly by a Gaussian process, and we assume a Gaussian process for the latent function $f$ on the latent space generated by the logit transformation of $\pi$.
The Gaussian process model may allow for very flexible modeling of the dose-toxicity relationship owing to its nonparametric nature, compared with model-based methods assuming parametric models for $f$.

The covariance function $k(x,x')$ has an important role in characterizing the property of $f$ sampled from the Gaussian process.
Among the various types of covariance functions, we use the squared exponential covariance function \cite{williams2006gaussian} given by
\begin{align}
    k(x,x') = \sigma_f^2 \exp\left( -\frac{(x-x')^2}{2\ell^2} \right),
    \label{eq: kernel}
\end{align}
where $\sigma_f > 0$ and $\ell > 0$ are hyperparameters.
The squared exponential covariance function is commonly used because the function $f$ generated from a Gaussian process with the squared exponential covariance function is infinitely differentiable and has a sufficiently smooth shape.
When modeling dose-response relationships, it is necessary to consider the assumption of monotonicity. 
The monotonicity of the function can be controlled by adjusting the hyperparameters of the covariance function. 
The details are discussed in Section \ref{sec: prior covariance}.

Suppose that the $i$-th patient treated with dose $x_i$ has a DLT outcome $y_i$.
Let $D_n = \{(x_i,y_i)\}_{i=1}^n$ be the observed data up to the $n$-th patient.
The likelihood function is then constructed as follows:
\begin{align}
    \mathcal{L}(D_n \mid f) &= 
    \prod_{i=1}^n \pi(x_i)^{y_i}
    \left\{1 - \pi(x_i) \right\}^{1 - y_i} \notag \\
    &= \prod_{i=1}^n \left\{\mathrm{logit}^{-1}(f(x_i))\right\}^{y_i} \left\{1-\mathrm{logit}^{-1}(f(x_i))\right\}^{1-y_i}.
    \label{eq: likelihood}
\end{align}

Given the observed data $D_n$, we derive the posterior distribution of $f$.
We generate samples from the posterior distribution of $f$ using a Markov chain Monte Carlo (MCMC) method. 
As $f$ and $\pi$ have a one-to-one correspondence, the posterior samples of $f$ are inverse logit transformed to obtain the posterior samples of $\pi$.
%

%------------------------------------------------------------------------------------------
\subsection{Prior specifications in the Gaussian process model}\label{sec: Establishing a Gaussian process prior}
%------------------------------------------------------------------------------------------

As described in Section \ref{sec: model}, we assume a Gaussian process prior on the latent function $f$.
We discuss how to construct the mean function $m$ and the covariance function $k$ of the Gaussian process prior by incorporating the prior information about the toxicity of the experimental drug.
The mean function $m$ should reflect our initial guess of the toxicity probability. 
However, sufficient prior information is generally unavailable when designing the phase I trial.
Therefore, we construct the mean function on the basis of limited prior information for applying our LSE design.

\subsubsection{Mean function}\label{sec: prior mean}
Here, we explain how to generate the mean function under the monotonic assumption on the dose-toxicity relationship.
Takahashi and Suzuki \cite{Takahashi2021-MTD} generated a mean function using a systematic approach to select an initial guess \cite{Lee2009-la} in the context of the CRM.
Although this approach is familiar to many clinical statisticians and can be easily implemented using the \texttt{getprior()} function in the \texttt{dfcrm} R package, it might be inappropriate because it was originally developed for the CRM. 
Therefore, we propose another approach to generate the mean function for use in our LSE design. 
Our approach is inspired by the quantile-based approach used for the prior specification in the Bayesian logistic regression model (BLRM) design \cite{neuenschwander2008critical}. 
The basic idea is to represent prior toxicity information as a quantile of the toxicity probability.

For the lowest and highest doses, we assume the following conditions:
\begin{align*}
    &\mathrm{Pr}(\pi(d_1) \geq \theta + \delta_1) = q_1,\\
    &\mathrm{Pr}(\pi(d_J) \leq \theta - \delta_1) = q_J,
\end{align*}
where $\delta_1 >0$ is a small margin and $q_1, q_J$ are pre-specified probabilities that are obtained from clinicians.
Historical information can be included to determine the values of $q_1$ and $q_J$, e.g., the results of trials of similar drugs or the results of earlier trials in other regions. 
If no historical information is available, we take a conservative value, e.g., $q_1 = q_J = 0.1$.
For the parameter $\delta_1$, we can use the proper dosing interval used in the mTPI \cite{ji2010modified} and BLRM \cite{neuenschwander2008critical} designs.
The proper dosing interval $[\theta-\delta_1,\theta+\delta_1]$ is the probability interval in which it is acceptable to include the toxicity probability at the dose selected for the MTD.

As the marginal prior distribution of $f(d_j)$ follows the normal distribution $N(m(d_j),\sigma_f^2)$ for $j=1,\dots,J$, we can derive the prior means at dose $d_1$ and $d_J$ as follows:
\begin{align}
    &m(d_1) = \mathrm{logit}(\theta + \delta_1) - z_{q_1}\tilde{\sigma}_f, \label{eq: m(d_1)} \\
    &m(d_J) = \mathrm{logit}(\theta - \delta_1) - z_{1-q_J}\tilde{\sigma}_f, \label{eq: m(d_J)}
\end{align}
where $z_q$ is the $100(1-q)$th percentile of the standard normal distribution and $\tilde{\sigma}_f$ is the mean of the prior distribution for $\sigma_f$, which is discussed in detail in Section \ref{sec: prior covariance}. 
Because we assume the prior distribution for $\sigma_f$, we replace $\sigma_f$ with its prior mean.

Assuming that $m(d_1), m(d_2), \dots, m(d_J)$ are linear, the remaining $m(d_2), m(d_3), \dots, m(d_{J-1})$ can be derived.
Alternatively, if we know the prior MTD location $\nu\in \{1,2,\dots,J\}$, we can derive the prior means as follows:
\begin{enumerate}
\renewcommand{\labelenumi}{(\roman{enumi})}
    \item If $\nu = 1$,
    modify $m(d_1)$ to $\mathrm{logit}(\theta)$ and derive the remaining prior means by linear interpolation with a line through $m(d_1)$ and $m(d_J)$.
    \item If $2 \leq \nu \leq \lfloor J/2 \rfloor$,
    set the value of $m(d_\nu)$ as $\mathrm{logit}(\theta)$ and derive the remaining prior means by linear interpolation with a line through $m(d_\nu)$ and $m(d_J)$.
    \item If $\lfloor J/2 \rfloor + 1 \leq \nu \leq J-1$,
    set the value of $m(d_\nu)$ as $\mathrm{logit}(\theta)$ and derive the remaining prior means by linear interpolation with a line through $m(d_1)$ and $m(d_\nu)$.
    \item If $\nu = J$,
    modify $m(d_J)$ to $\mathrm{logit}(\theta)$ and derive the remaining prior means by linear interpolation with a line through $m(d_1)$ and $m(d_J)$.
\end{enumerate}
For example, suppose that $J = 5, \theta=0.3$, $\delta_1 = 0.05$, $\tilde{\sigma}_f = 1.35$, and $q_1 = q_J = 0.1$.
From \eqref{eq: m(d_1)} and \eqref{eq: m(d_J)}, we get $m(d_1) = -2.35$ and $m(d_J) = 0.64$.
If $\nu = 1$, we change the value of $m(d_1)$ to $\mathrm{logit}(\theta) = -0.85$ and obtain $(m(d_1),\dots,m(d_5)) = (-0.85, -0.48, -0.11, 0.27, 0.64)$ by assuming a linear relationship from $m(d_1)$ to $m(d_5)$.
If $\nu = 2$, we set $m(d_2) = \mathrm{logit}(\theta) = -0.85$ and obtain $(m(d_1),\dots,m(d_5)) = (-1.34, -0.85, -0.35, 0.14, 0.64)$ by assuming a line drawn through $m(d_2)$ and $m(d_5)$.
The prior means obtained through this approach can be inverse logit transformed to obtain an initial guess of the toxicity probability.
Figure \ref{fig: prior} (left) shows such an initial guess on the probability scale.
Figure \ref{fig: prior} (right) shows the prior dose-toxicity relationship with the medians and equal-tailed credible intervals at each dose when the prior MTD location $\nu$ is 3.

\begin{figure}[htbp]
 \begin{minipage}{0.48\hsize}
  \begin{center}
   \includegraphics[width=65mm]{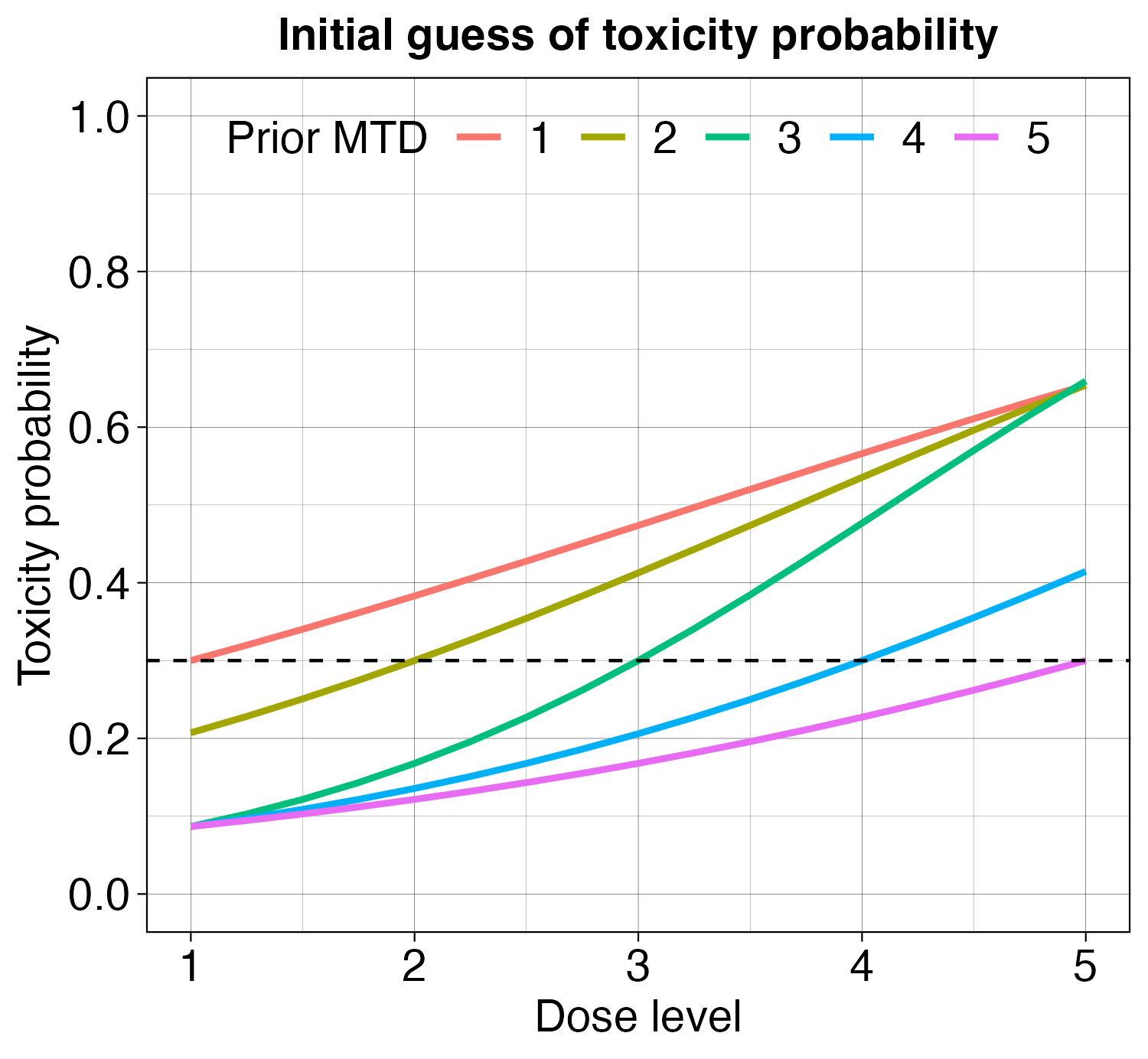}
  \end{center}
 \end{minipage}
 \begin{minipage}{0.48\hsize}
  \begin{center}
   \includegraphics[width=65mm]{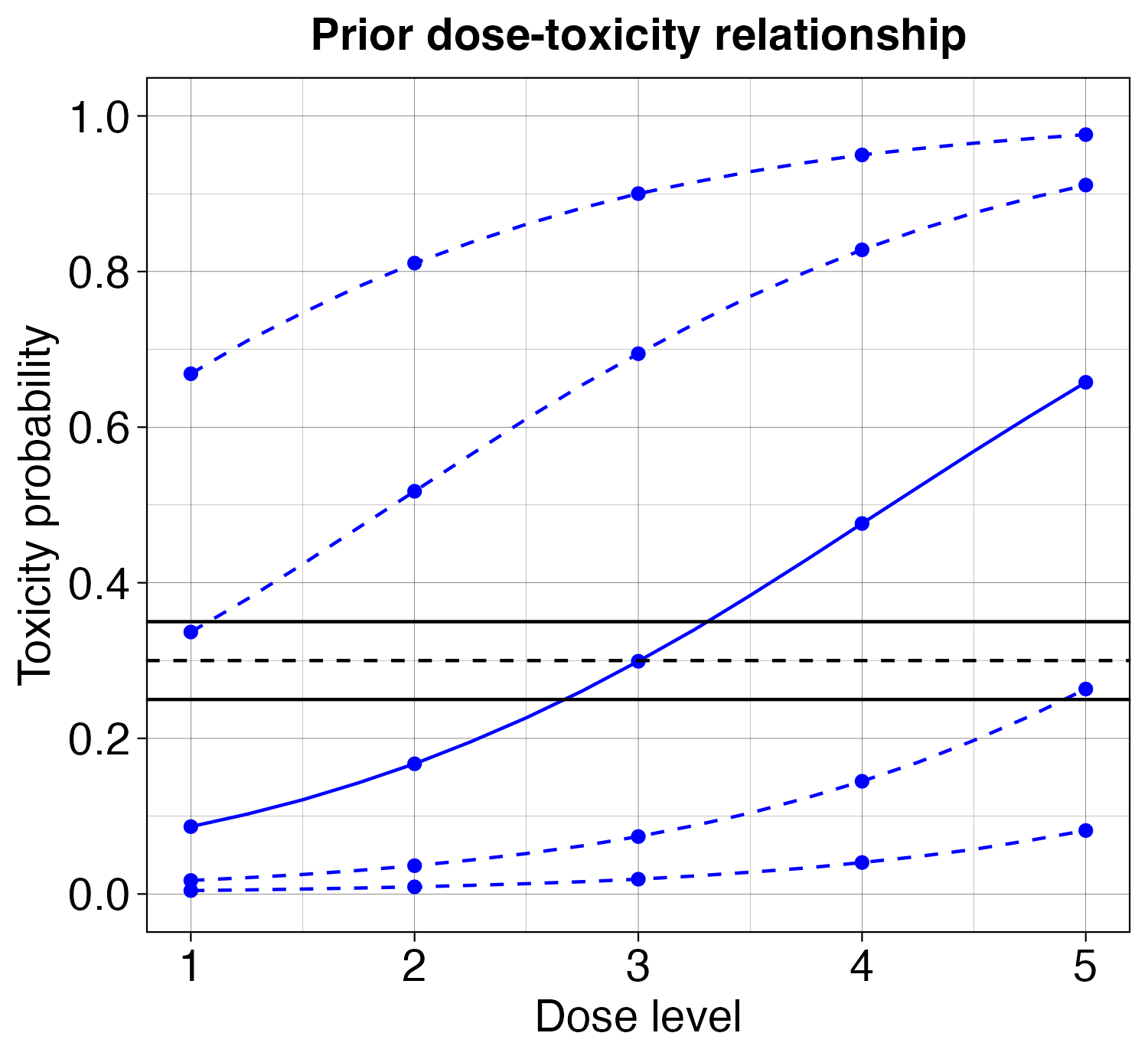}
  \end{center}
 \end{minipage}
 \caption{(left) Initial guess in each case of $\nu$ under $J = 5, ~\theta=0.3, ~\delta_1 = 0.05, ~\tilde{\sigma}_f = 1.35, ~q_1 = q_J = 0.1$. (right) Prior dose-toxicity relationship when $\nu = 3$. The solid blue line represents the median, which is equal to the initial guess. The dashed blue lines represent the 80\% and 95\% equal-tailed credible intervals. The horizontal line represents the proper dosing interval $[\theta-\delta_1, \theta + \delta_1]$.}
 \label{fig: prior}
\end{figure}

\subsubsection{Covariance function}\label{sec: prior covariance}

As described in Section \ref{sec: model}, we use the squared exponential covariance function as the covariance function of the Gaussian process. 
The following describes how to specify the hyperparameters $\sigma_f$ and $\ell$ of the squared exponential covariance function.

The hyperparameter $\ell$ controls the horizontal scale of $f$.
If the value of $\ell$ is large, the shape of $f$ and $\pi$ flattens along the mean function.
By contrast, if the value of $\ell$ is smaller than 1, which is the width of the dose range $d_J - d_1$, the shape of $f$ and $\pi$ becomes uneven and does not satisfy the monotonicity assumption of toxicity.
Therefore, following Takahashi and Suzuki \cite{Takahashi2021-MTD}, we set $\ell$ to $1$ in order to satisfy the monotonicity assumption.
This ensures that a monotonically increasing function is obtained with a high probability (see Figure \ref{fig: gp_prior_f}, \ref{fig: gp_prior_pi}).

The hyperparameter $\sigma_f$ is the marginal standard deviation of $f(x)$ at each dose $x$ and it controls the vertical scale of $f$.
As the value of $\sigma_f$ increases, the variation in the samples of $f$ and $\pi$ obtained from the Gaussian process prior increases.
We place a weakly informative prior on $\sigma_f$:
\begin{align}
    \log(\sigma_f) \sim N(\mu,\tau^2).
    \label{eq: prior for sigma_f}
\end{align}
The values of $\mu$ and $\tau$ are determined as follows.
Let $\sigma_{f1}$ and $\sigma_{f2}$ be the upper and lower bounds of the values that $\sigma_f$ is expected to take, respectively.
It may be reasonable to determine $\mu$ and $\tau$ such that the 95\% credible interval for $\sigma_f$ will be $[\sigma_{f1}, \sigma_{f2}]$.
Therefore, we can derive two conditions $\log(\sigma_{f1}) \approx \mu - 2 \tau$ and $\log(\sigma_{f2}) \approx \mu + 2 \tau$, which can be solved to obtain
\begin{align*}
\begin{split}
    \mu &= \frac{1}{2}(\log(\sigma_{f1}) + \log(\sigma_{f2})),\\
    \tau &= \frac{1}{4}(\log(\sigma_{f2}) - \log(\sigma_{f1})).
\end{split}
%\label{eq: length-scale prior solution}
\end{align*}
%

%---------------------------------------------------------------------
\subsection{Dose-finding strategy}\label{sec: Dose-finding strategy}
%---------------------------------------------------------------------

\subsubsection{Criteria for determining the next dose}

As described in Section \ref{sec: level set estimation}, in the LSE framework, the point $x\in\mathcal{X}$ that maximizes the acquisition function $\alpha(x)$ is chosen as the next evaluation point. 
In the case of a dose-finding study, the dose administered to the next cohort would be the dose that maximizes the acquisition function.
However, in a dose-finding study, safety considerations are important in the dose-allocation strategy. For example, one must carefully assess whether the toxicity of the next dose to be administered is within an admissible range. 
In addition, dose skipping when escalating should be avoided from a safety perspective.
Therefore, we follow the BO design and employ an admissible dose set. The next dose to be administered will be selected from the admissible dose set.

In summary, the next dose is determined by
\begin{align}
    x_{n+1} &= \argmax_{d \in A_n } \alpha_n(d),
    \label{eq: next dose}
\end{align}
where $A_n \subseteq \{d_1,\dots,d_J\}$ is an admissible dose set at the time $n$ patients are tested.
The admissible dose set $A_n$ must satisfy all of the following conditions:
\begin{enumerate}
\renewcommand{\labelenumi}{(\roman{enumi})}
    \item All doses included in $A_n$ are less than or equal to one dose higher than the current dose $x_n$.
    \item If the lowest dose $d_1$ satisfies $\mathrm{Pr}(\pi(d_1)\geq\theta \mid D_n) \geq c_1$, i.e., the posterior probability of $d_1$ being classified as a superlevel set is greater than $c_1$, $A_n$ includes only $d_1$.
    \item All doses $d \in A_n$ must satisfy $\mathrm{Pr}(\pi(d)\geq\theta \mid D_n) \leq c_2$, i.e., the posterior probability of $d$ being classified as a superlevel set is less than $c_2$.
\end{enumerate}
Condition (i) is intended to prohibit dose skipping when escalating.
Condition (iii) is intended to ensure that doses with unacceptable toxicity are not selected as the next dose, which is similar to the dose elimination rule in the Keyboard \cite{yan2017keyboard} and BOIN \cite{liu2015bayesian} designs.
Condition (ii) is stricter than condition (iii) and therefore requires $c_1 \leq c_2$.

\subsubsection{Standard acquisition function}

First, we consider the commonly used acquisition function, i.e., the expected misclassification probability \cite{Bryan2005}, to reduce the uncertainty in classifying the tested doses into $L$ or $H$.
This is based on the posterior probability $p_n(d)$ that dose $d$ is classified as a sublevel set after obtaining the toxicity data from $n$ patients:
\begin{align*}
    p_n(d) = \mathrm{Pr}(d\in L \mid D_n) = \mathrm{Pr}(\pi(d)\leq\theta \mid D_n).
\end{align*}
Because dose $d$ will be classified as $L$ if $p_n(d) \geq 0.5$, and classified as $H$ otherwise, the expected misclassification probability is calculated as $\min (p_n(d), 1-p_n(d))$.
It is reasonable to select a dose with the highest misclassification probability as the next dose because reducing the classification uncertainty leads to accurate estimation of the level set and hence the MTD.
The misclassification probability acquisition function seems to be different from the classification ambiguity acquisition function \eqref{eq:confidence_bound-based_acquisition} considered in the theory of Section~\ref{sec: Theory}. 
However, both share a very similar concept and exhibit closely related behavior (see Appendix~\ref{appendix: Acquisition functions}). 
In other words, the convergence of our LSE algorithm is expected to approximately hold in practice even when using the misclassification probability acquisition function. 

\subsubsection{Acquisition function with overdose control}

However, it may not be appropriate to consider only the misclassification probability because it does not distinguish between overdosing and underdosing, possibly resulting in a higher risk of overdosing.
For example, for two adjacent doses $d_j$ and $d_{j+1}$, with $p_n(d_j) = 0.6$ and $p_n(d_{j+1}) = 0.4$, the value of the expected misclassification probability is $0.4$ in both cases.
However, as dose $d_{j+1}$ is expected to be more toxic than dose $d_j$ under the monotonicity assumption, it may be better to select dose $d_j$ as the next dose from the perspective of patient safety.
Therefore, we propose that the naïve misclassification probability be multiplied by a weight $p_n(d)^r$ to suppress overdose administration.
Our acquisition function is given by
\begin{align}
    \alpha_n(d) = p_n(d)^r \min (p_n(d), 1-p_n(d)),
    \label{eq: acquisition}
\end{align}
where $r \geq 0$ is a predetermined parameter.
The value of $p_n(d)$ is lower for doses with a higher expected probability of being classified in the superlevel set; therefore, the value of $\alpha_n(d)$ is also lower.
This means that a dose with higher toxicity is less likely to be chosen as the next dose.
The parameter $r$ controls how much to discount the value of the misclassification probability, and it should be calibrated before starting the trial.
When $r = 0$, $\alpha_n(d)$ corresponds to the misclassification probability itself.
A high value of $r$ will lead to strict overdose control.
The choice of the $r$ value will be discussed in Section \ref{sec: Discussion}.
Although this section has discussed the acquisition function on the basis of the expected misclassification probability, we conduct a comparison with the classification ambiguity as a baseline acquisition function in Appendix \ref{appendix: Acquisition functions}.
%

%-------------------------------------------------------
\subsection{Trial design}\label{sec: trial design}
%-------------------------------------------------------
The specification of the prior information may involve subjectivity.
Although the influence of subjectivity may diminish with data accumulation, it is preferable to avoid overdependence on the prior information, especially when the number of observations is small.
Therefore, we consider a two-stage design, following the approach of Takahashi and Suzuki \cite{Takahashi2021-MTD}.
The first stage (1, 2 below) aims to accumulate toxicity data, as little information is available on the dose-toxicity relationship.
In the second stage (3–7 below), we model the dose-toxicity curve using a Gaussian process model \eqref{eq: GP} based on the data obtained in the first stage and proceed with an adaptive dose finding based on the LSE algorithm.
The overall flow of the proposed design is as follows.

\vspace{3mm}
\noindent
First stage
\begin{enumerate}
    \item Assign the lowest dose $d_1$ to an initial cohort of patients (typically three patients).
    \item A standard design (such as the BOIN design) is implemented until a cumulative total of $n_1$ patients experience DLT or the highest dose $d_J$ is reached, where $n_1$ is a prespecified parameter.
\end{enumerate}
\noindent
Second stage
\begin{enumerate}
\setcounter{enumi}{2}
    \item Determine the prior mean function $m(x)$ on the basis of the data from the first stage.
    If we have an initial guess $a_1,\dots,a_J$ of the toxicity probability, we set the prior mean as $m(d_j) = \mathrm{logit}(a_j),~ j=1,\dots,J$.
    Otherwise, the prior mean can be derived using the quantile-based approach described in Section \ref{sec: prior mean}. We can set the prior MTD location $\nu$ to the dose recommended by the standard design for the next cohort at the end of the first stage.
    \item Generate the posterior samples over $\pi$ using the MCMC mathod.
    \item An acquisition function \eqref{eq: acquisition} is calculated using the posterior sample on $\pi$, and the dose determined by \eqref{eq: next dose} is assigned to the next cohort.
    \item Steps 4 and 5 are repeated until the total sample size $N$ is exhausted or a pre-specified stopping rule is met.
    \item The dose space $\mathcal{X}$ is partitioned into a sublevel set and a superlevel set on the basis of $p_N(d)$. If $p_N(d) \geq 0.5$, dose $d$ is classified as a sublevel set; otherwise, dose $d$ is classified as a superlevel set.
    This yields estimates $\widehat{L}$ and $\widehat{H}$ for the sublevel set and superlevel set, respectively.
    \textcolor{black}{The MTD $x^*$ is estimated by $\widehat{x}^* = \max_d \widehat{L}$.}
    The recommended dose \textcolor{black}{for the MTD} is selected as follows:
    \begin{enumerate}
    \renewcommand{\labelenumi}{(\roman{enumi})}
        \item If all candidate doses $d_1,\dots,d_J$ are classified into the superlevel set, $d_1$ is selected as the recommended dose.
        \item If all candidate doses $d_1,\dots,d_J$ are classified into the sublevel set, $d_J$ is selected as the recommended dose.
        \item If neither (a) nor (b), let $d^-$ be the highest candidate dose in $\widehat{L}$ and $d^+$ be the lowest candidate dose in $\widehat{H}$.
        For $d^-$ and $d^+$, we calculate the posterior probabilities that the toxicity probability at these doses falls in the proper dosing interval $[\theta - \delta_1, \theta + \delta_1]$:
        \begin{align*}
            u(d) := \mathrm{Pr}(\theta - \delta_1 \leq \pi(d) \leq \theta + \delta_1 \mid D_N),~~~d\in\{d^-,d^+\}.
        \end{align*}
        If $u(d^-) < u(d^+)$ and $\widehat{\pi}(d^+) \leq \theta + \delta_2$, $d^+$ is selected as the recommended dose, where $\delta_2 ~(\geq \delta_1)$ is a prespecified small value.
        Otherwise, $d^-$ is selected as the recommended dose.
    \end{enumerate}
\end{enumerate}

%==========================================================
\section{Numerical results}\label{sec: Numerical results}
%==========================================================
%-------------------------------
\subsection{Trial example}
%-------------------------------
We present an illustrative example of the proposed LSE design.
We considered a scenario with a dose level of $J=5$, target toxicity probability of $\theta=0.3$, cohort size of 3, sample size in the first stage of $n_1=2$, and total sample size of $N=36$.
The upper panel of Figure \ref{fig: example} shows the dose allocation path. The lowest dose was assigned to the first cohort.
The first stage ended when data from the 4th cohort was observed, with a total of 2 patients having experienced a DLT.
The lower panel of Figure \ref{fig: example} shows the posterior distribution of $\pi(x)$ after obtaining toxicity data in each cohort and the value of the acquisition function $\alpha(x)$ at each dose.
The median of the posterior distribution of $\pi(x)$ and 20 samples from this distribution are also shown.
The left panel shows the posterior distribution at the end of the first stage derived from the prior distribution with $\nu=3$.
The admissible dose set consisted of dose levels 1–4.
Dose level 3, which had the largest value of the acquisition function in the admissible dose set, was selected as the next dose.
After the 7th cohort, the width of the credible intervals for the toxicity probability at lower doses became smaller and the distribution of the dose-toxicity curves was updated downward.
The admissible dose set included all the candidate doses, and dose level 4 was selected as the next dose.
From the 7th cohort, dose level 4 was always administered until the end of the trial.
The right panel shows the posterior distribution at the end of the entire trial.
The posterior probability that each dose was classified as a sublevel set is $(p_N(d_1),\dots,p_N(d_5)) = (1.00, 1.00, 0.99, 0.68, 0.16)$; thus, we classified dose $d_1$–$d_4$ into the sublevel set and dose $d_5$ into the superlevel set.
As $u(d_4)=0.39$ and $u(d_5)=0.18$, dose $d_4$ was finally selected as the recommended dose for subsequent clinical trials.

\begin{figure}[htbp]
\begin{center}
\includegraphics[width=0.95\linewidth]{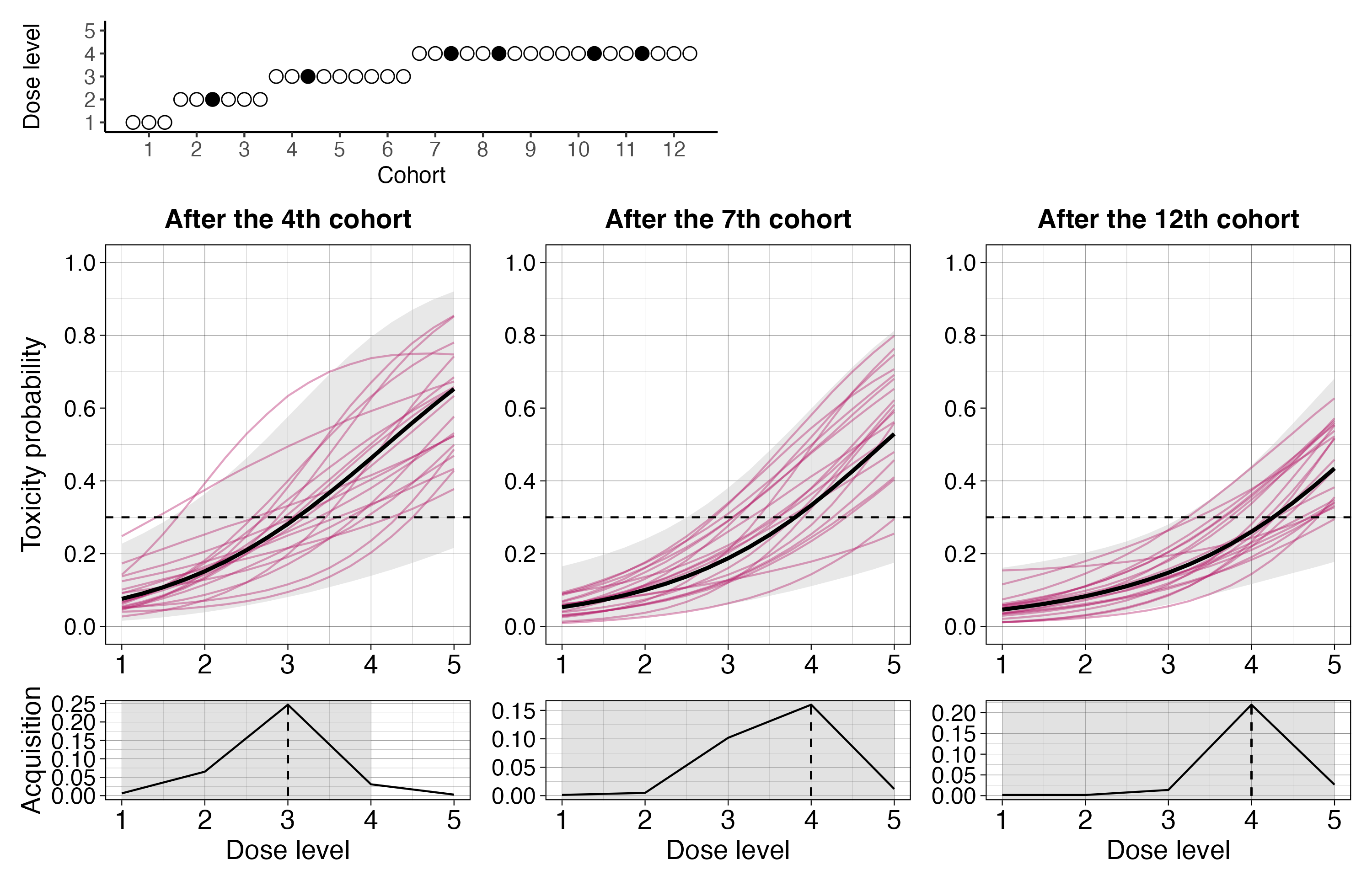}
\caption{Illustrative example of the proposed design. The upper panel shows the dose allocation path. The white circles represent patients with no DLT and the black circles represent patients with DLT. The lower panel shows the posterior distribution of $\pi(x)$ after the 4th, 7th, and 12th cohorts, with its median (bold line) and 20 random samples from them (red lines), and the value of the acquisition function $\alpha(x)$ at each dose, respectively. The shaded part of the posterior distribution of $\pi(x)$ represents the 95\% credible interval and the shaded part of the acquisition function represents the admissible dose set.}
\label{fig: example}
\end{center}
\end{figure}

\begin{table}[htbp]
\centering 
\caption{Toxicity probability scenarios. The bold values denote the MTD.}
\scalebox{0.9}{
\begin{tabular}{ccccccccccccc}
\cline{1-6} \cline{8-13}
\multicolumn{6}{c}{$\theta = 0.2$}                   &  & \multicolumn{6}{c}{$\theta = 0.3$}                   \\ \cline{1-6} \cline{8-13} 
\multirow{2}{*}{scenario}  & \multicolumn{5}{c}{dose level} &  & \multirow{2}{*}{scenario}  & \multicolumn{5}{c}{dose level} \\ \cline{2-6} \cline{9-13} 
   & 1    & 2    & 3    & 4    & 5    &  &    & 1    & 2    & 3    & 4    & 5    \\ \cline{1-6} \cline{8-13} 
1  & \textbf{0.20} & 0.26 & 0.40 & 0.45 & 0.46 &  & 11  & \textbf{0.30} & 0.36 & 0.42 & 0.45 & 0.46 \\
2  & \textbf{0.20} & 0.29 & 0.35 & 0.35 & 0.58 &  & 12  & \textbf{0.30} & 0.40 & 0.55 & 0.60 & 0.70 \\
3  & 0.10 & \textbf{0.20} & 0.25 & 0.35 & 0.40 &  & 13  & 0.08 & \textbf{0.30} & 0.38 & 0.42 & 0.52 \\
4  & 0.08 & \textbf{0.20} & 0.30 & 0.45 & 0.65 &  & 14  & 0.13 & \textbf{0.30} & 0.42 & 0.50 & 0.80 \\
5  & 0.04 & 0.06 & \textbf{0.20} & 0.32 & 0.50 &  & 15  & 0.04 & 0.07 & \textbf{0.30} & 0.35 & 0.42 \\
6  & 0.01 & 0.10 & \textbf{0.20} & 0.26 & 0.35 &  & 16  & 0.01 & 0.12 & \textbf{0.30} & 0.41 & 0.55 \\
7  & 0.05 & 0.06 & 0.07 & \textbf{0.20} & 0.31 &  & 17  & 0.06 & 0.07 & 0.12 & \textbf{0.30} & 0.40 \\
8  & 0.02 & 0.04 & 0.10 & \textbf{0.20} & 0.25 &  & 18  & 0.02 & 0.05 & 0.16 & \textbf{0.30} & 0.36 \\
9  & 0.01 & 0.02 & 0.07 & 0.08 & \textbf{0.20} &  & 19  & 0.01 & 0.02 & 0.04 & 0.06 & \textbf{0.30} \\
10 & 0.01 & 0.02 & 0.03 & 0.04 & \textbf{0.20} &  & 20 & 0.06 & 0.07 & 0.08 & 0.12 & \textbf{0.30} \\ \cline{1-6} \cline{8-13} 
\end{tabular}
}
\label{tab: scenarios}
\end{table}

%-------------------------------------------------
\subsection{Simulations}\label{sec: Simulations}
%-------------------------------------------------
We conducted simulations to compare the performance of the proposed LSE design with that of the CRM, BOIN, and BO designs.
See Appendix \ref{appendix: Detailed settings} for an overview of the CRM, BOIN, and BO designs as well as the specifications used in the simulations.
We considered a dose level of $J=5$ for a maximum sample size of $N=$36, with a cohort size of 3.
We considered a total of 20 scenarios with different profiles of the toxicity probability following Yan et al. \cite{yan2017keyboard}, as shown in Table \ref{tab: scenarios}.
The target probability was set as $\theta = 0.2$ for Scenarios 1–10 and $\theta = 0.3$ for Scenarios 11–20.
We performed 2000 simulations for each scenario.

For the proposed LSE design and the BO design, we adopted the two-stage approach (see Section \ref{sec: trial design} for this approach). 
We implemented the BOIN design with the default setting given by Liu and Yuan \cite{liu2015bayesian} as a common first stage design to accumulate data for applying the respective designs in the second stage. 
The first stage started with administering the lowest dose $d_1$, where the termination criterion parameter $n_1$ was set to 2.

In the second stage of the LSE design, we specified the Gaussian process prior as follows. 
For the covariance function of the Gaussian process, we set $\ell = 1$ and used a weakly informative prior $\mathrm{log}(\sigma_f) \sim N(0.20, 0.45^2)$ derived by setting $\sigma_{f1} = 0.5,~ \sigma_{f2} = 3$.
For the prior mean function, we used the quantile-based approach described in Section \ref{sec: prior mean}, assuming the general case where there was no information on the toxicity of the experimental drug before the start of the trial, and set $q_1 = q_J = 0.1$ and $\delta_1 = 0.05$.
For the acquisition function, we used the misclassification probability with overdose control as shown in \eqref{eq: acquisition}. 
In the main results, we present the results for $r = 0$ and $r = 1$. 
The results for other values of $r$ are presented in Section \ref{sec: sensitivity about gamma}.
Following Takahashi and Suzuki \cite{Takahashi2021-MTD}, we set $c_1 = 0.5$ and $c_2 = 0.9$ to construct the admissible dose set $A_n$ with $\delta_1 = 0.05$ and $\delta_2 = 0.1$.
For patient safety considerations, we stopped the trial if the condition $\mathrm{Pr}(\pi(d_1) \geq \theta \mid D_n) \geq 0.9$ was met.

We assessed the performance of each design in terms of accuracy in identifying the MTD and overdose control to ensure patient safety. 
In the following, we present the performance comparison results for each of these two aspects. 
See Appendix \ref{appendix: results} for complete numerical results.

\begin{figure}[t]
\begin{center}
\includegraphics[width=0.75\linewidth]{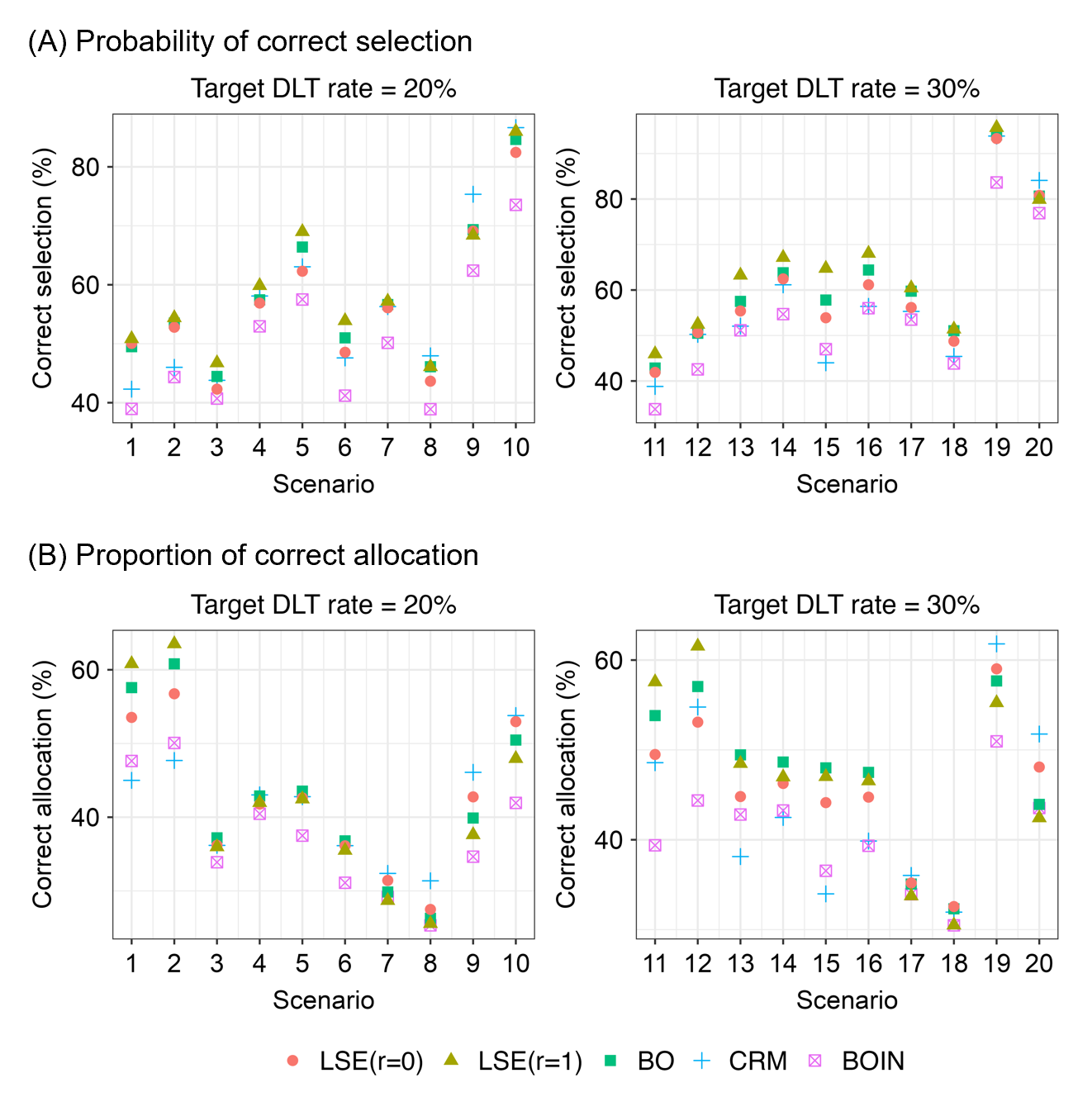}
\caption{Comparison of accuracy for identifying and allocating the MTD for each design, with PCS (A) and PCA (B) obtained from 2000 simulations. Higher values for both PCS and PCA indicate better performance.}
\label{fig: result_C}
\end{center}
\end{figure}

\subsubsection{Accuracy performance}
We evaluated two performance indices for accuracy: the probability of correct selection (PCS) and the proportion of correct allocation (PCA) of the MTD. 
PCS was calculated as the proportion of simulated trials that correctly select the target MTD as a recommended dose at the end of each simulated trial, whereas PCA was calculated as an average proportion of patients who are allocated to the target MTD within each simulated trial across 2000 simulations.

Figure \ref{fig: result_C}A shows the results of PCS for Scenarios 1–10 with a target DLT rate of 20\% (left panel) and Scenarios 11–20 with a target DLT rate of 30\% (right panel).
The LSE design with $r=1$ had the highest PCS values for all scenarios except Scenarios 8, 9, 10, and 20.
The LSE design with $r=0$ tended to show slightly lower PCS values than that with $r=1$; however, it showed PCS values comparable to or higher than the CRM design in most scenarios except for Scenarios 8, 9, 10, and 20.
With respect to the other designs, the BOIN design exhibited the worst performance in most scenarios except Scenario 15.
The BO design outperformed the CRM and BOIN designs in many scenarios, while it was slightly inferior to the LSE design with $r=1$.
The CRM design showed the best PCS for Scenarios 8, 9, 10, and 20, and the worst values for Scenario 15.
This indicates that the performance of the CRM depends on the validity of the assumptions of the parametric dose-toxicity model.
Meanwhile, the LSE and BO designs based on the non-parametric Gaussian process model generally provided higher PCS values than the other methods across all scenarios with various dose-toxicity profiles.

In the case of PCA, the relative performance among the designs appeared to be more scenario-dependent, as shown in Figure \ref{fig: result_C}B.
When the MTD was the lowest dose (Scenarios 1, 2, 11, and 12), the LSE design with $r=1$ showed the highest PCA values, followed by the BO design with a difference of 3\%–4\%.
The LSE design with $r=0$ showed lower PCA values than that with $r=1$.
Meanwhile, the CRM and BOIN designs showed lower PCA in these scenarios.
These results can be explained by a greater opportunity for administering overdoses to patients by CRM (see Figure \ref{fig: result_O}B) and a higher percentage of early stopping of the trial exceeding 30\% in these scenarios by BOIN (see Figure \ref{fig: Early stop}).
When the MTD was dose level 2 or 3 (Scenarios 3–6,13–16), the BO design showed better performance.
In particular, in scenarios with a target DLT rate of 20\%, LSE, BO, and CRM performed comparably, whereas BOIN was inferior to these designs.
In Scenarios with a target DLT rate of 30\%, LSE with $r=1$ was slightly inferior to BO, but superior to BOIN and CRM.
Finally, when the MTD was higher dose levels 4 or 5 (Scenarios 7–10,17–20), CRM showed higher PCA than the other designs.
The LSE design with $r=0$ showed higher PCA than that with $r=1$.

\begin{figure}[t]
\begin{center}
\includegraphics[width=0.75\linewidth]{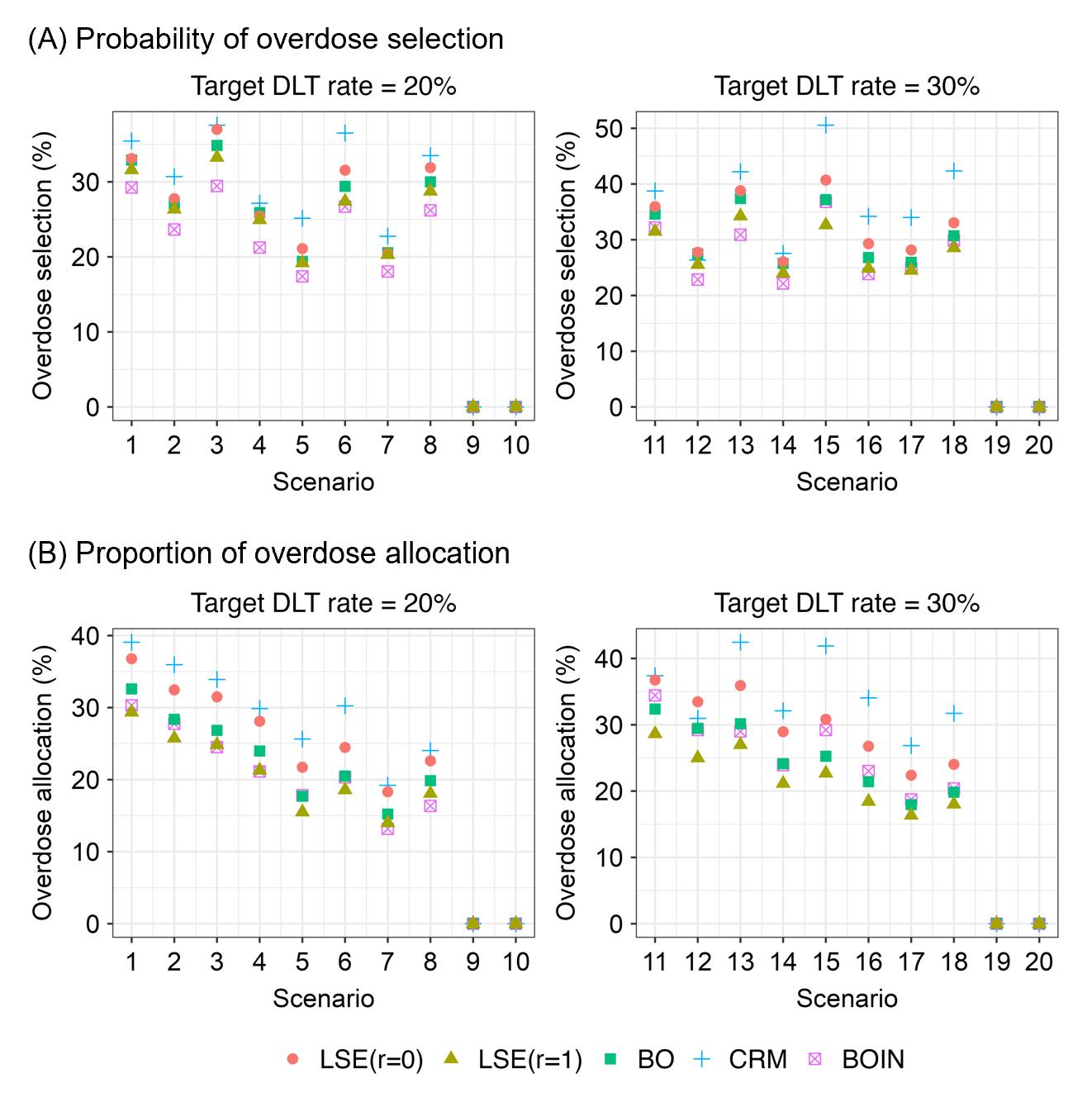}
\caption{Comparison of the risk of overdosing for each design with POS (A) and POA (B) obtained from 2000 simulations. Lower values for both POS and POA indicate better performance.}
\label{fig: result_O}
\end{center}
\end{figure}

\subsubsection{Safety performance}
We evaluated two performance indices for overdose related to patient safety: the probability of overdose selection (POS) and the proportion of overdose allocation (POA).
POS was calculated as the proportion of simulated trials that select a higher dose than the target MTD as a recommended dose at the end of each simulated trial, whereas POA was calculated as an average proportion of patients who are allocated to doses above the MTD within each simulated trial across 2000 simulations.

Figures \ref{fig: result_O}A and \ref{fig: result_O}B show the results of POS and POA, respectively.
First, we note that, in Scenarios 9, 10, 19, and 20, both POS and POA were zero because the target MTD was the highest dose in these scenarios.
According to Figure \ref{fig: result_O}A, the BOIN design provided the best POS values in most scenarios, while the LSE design with $r=1$ provided the next lowest POS values in many scenarios. 
Although BOIN generally outperformed LSE ($r=1$), their performances were comparable, especially in Scenarios 6, 11, 14, and 16–18.
The LSE design with $r=1$ tended to yield a lower POS than that with $r=0$, although the difference was marginal in some scenarios.
The CRM provided the highest POS in all scenarios except Scenario 12, indicating that its overdose control performance is poor compared to the other design methods.

In terms of POA, the LSE design with $r=1$ demonstrated good overdose control performance, achieving the lowest POA values in all scenarios except Scenarios 3, 4, 7, and 8 (see Figure \ref{fig: result_O}B). 
Notably, in Scenarios 1–10, the BOIN design exhibited performance comparable to that of the LSE design with $r=1$, also achieving low POA values.
In particular, the LSE design with $r=1$ provided safer dose allocation than the BO and CRM designs throughout all the scenarios. When $r$ was set to 0 instead of 1 in the LSE design, the POA values increased by more than 5\% in most scenarios, indicating a notable reduction in overdose control.
The CRM consistently had the highest risk of overdosing, with a difference in POA values of up to 10\% compared with the other designs. 
Consequently, CRM also tended to result in a higher proportion of patients experiencing DLTs throughout the trial (see Figure \ref{fig: DLT_p}).

\begin{figure}[htbp]
\begin{center}
\includegraphics[width=0.95\linewidth]{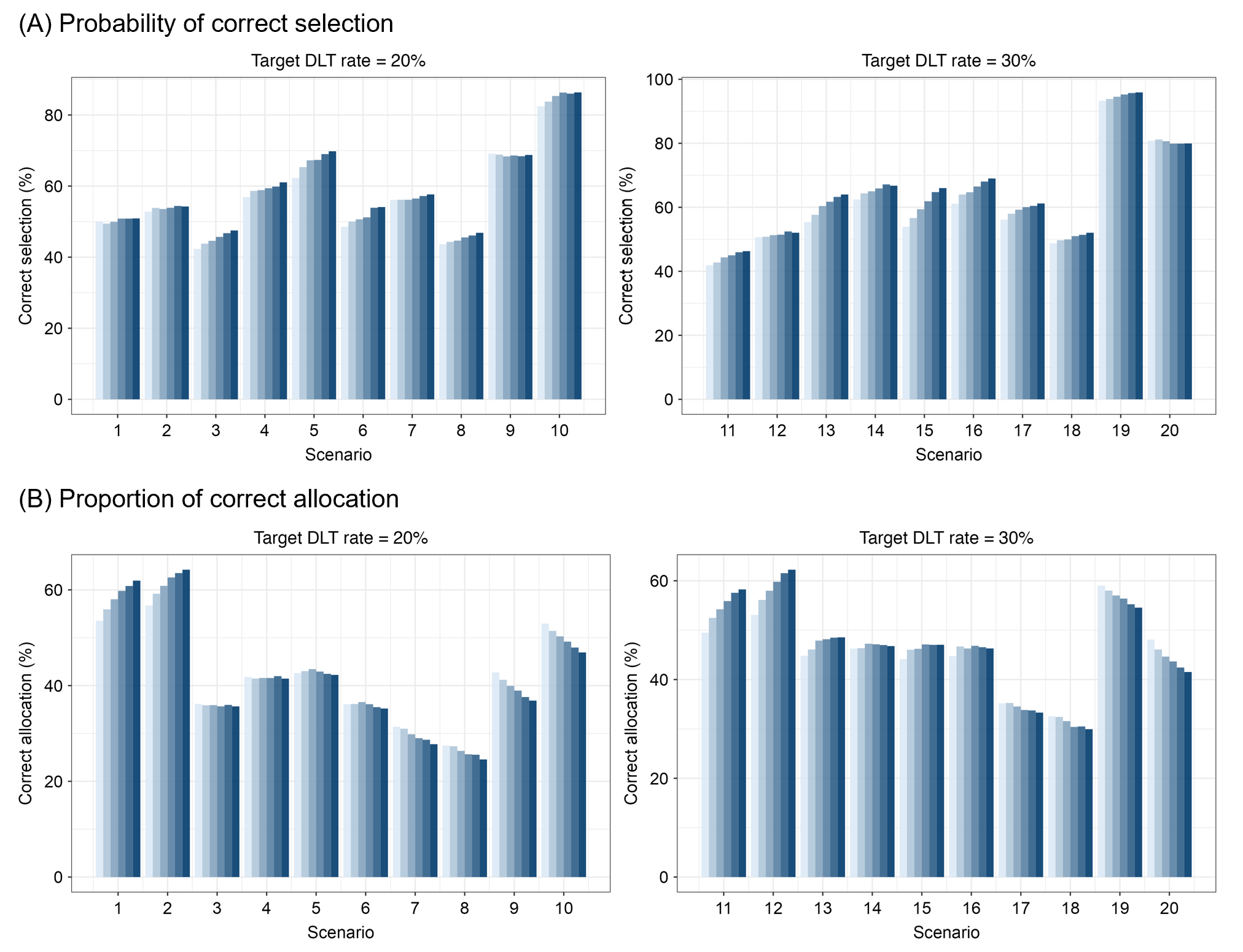}
\caption{Sensitivity to the value of $r$. For each scenario, from left to right, the values of $r$ correspond to 0, 0.25, 0.50, 0.75, 1.00, and 1.25. The upper panel shows PCS (A) and the lower panel shows PCA (B).}
\label{fig: result_C_gamma}
\end{center}
\end{figure}

\subsubsection{Sensitivity to the value of the parameter $r$}\label{sec: sensitivity about gamma}
For the proposed LSE design, we also evaluated how the value of the tuning parameter $r$ in the acquisition function \eqref{eq: acquisition} affects the performance indices.
Figures \ref{fig: result_C_gamma} and \ref{fig: result_O_gamma} show the results of the respective performance indices for $r$ values of 0, 0.25, 0.50, 0.75, 1.00, and 1.25.
Here, all the specifications in the LSE design, except the value of $r$, were the same as those in the previous simulations.
In some scenarios, the PCS value was constant over various values of $r$, while in other scenarios, the PCS value increased with $r$; however, the degree of increase varied from scenario to scenario.
In scenarios where the MTD was the lowest dose, PCA increased with $r$.
In scenarios where the MTD was dose level 4 or 5, PCA decreased with $r$.
In general, larger values of $r$ were associated with conservative dose allocations, and the aforementioned results can be explained by this property.
Meanwhile, it is interesting to note that, when the MTD was an intermediate dose between the candidate doses, the value of $r$ had little effect on PCA.
Finally, with respect to the impact on overdose control shown in Figure \ref{fig: result_O_gamma}, in nearly all the scenarios, both POS and POA generally decreased for larger values of $r$; in particular, a significant decreasing trend was observed for POA.
Again, this implies that a larger $r$ results in a more conservative dose allocation.

\begin{figure}[htbp]
\begin{center}
\includegraphics[width=0.95\linewidth]{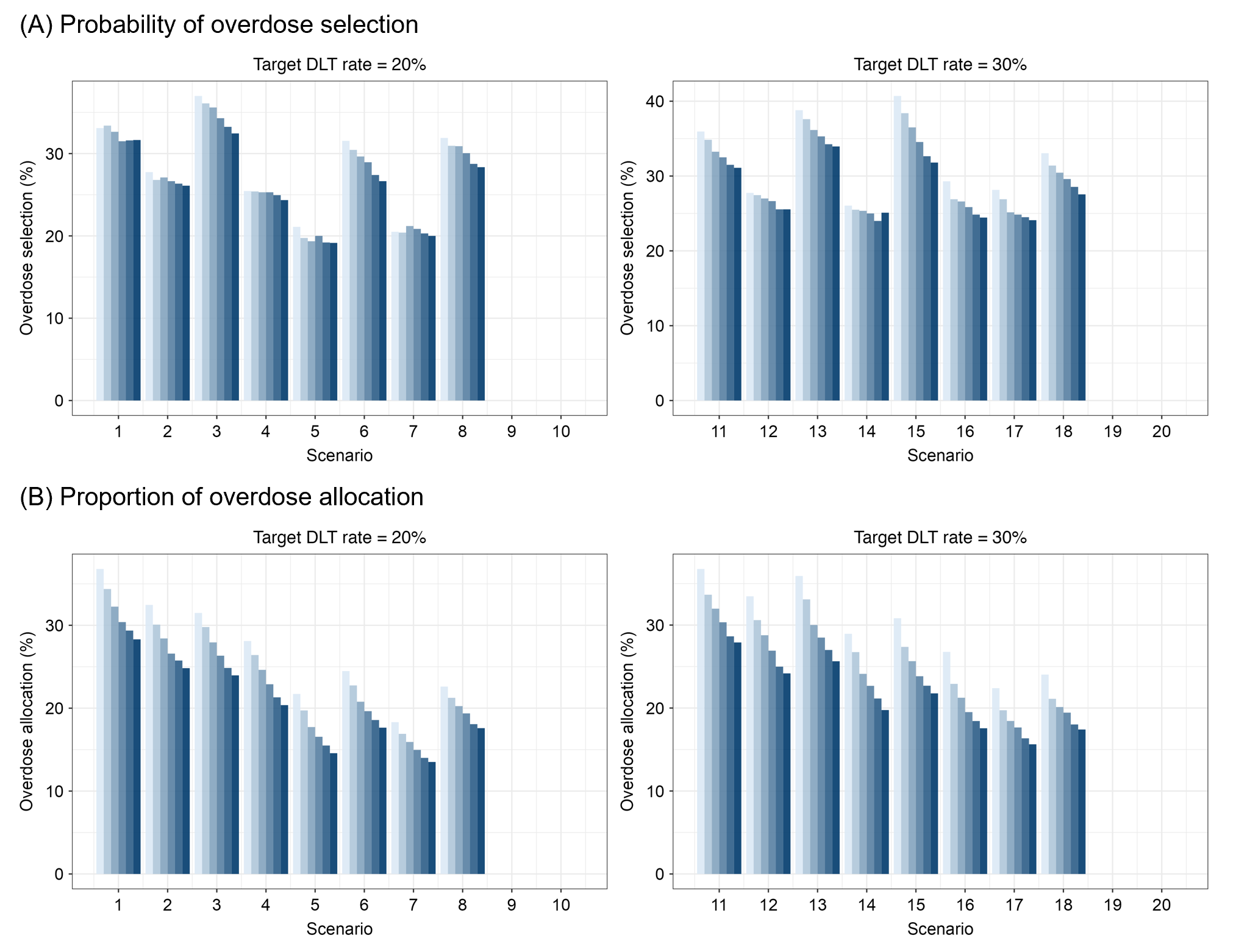}
\caption{Sensitivity to the value of $r$. For each scenario, from left to right, the values of $r$ correspond to 0, 0.25, 0.50, 0.75, 1.00, and 1.25. The upper panel shows POS (A) and the lower panel shows POA (B).}
\label{fig: result_O_gamma}
\end{center}
\end{figure}

%=============================================
\section{Discussion}\label{sec: Discussion}
%=============================================
To the best of our knowledge, this study is the first to attempt to address the dose-finding problem in phase I cancer clinical trials as a problem of LSE \cite{Gotovos2013}. %
According to a formulation of the LSE design for the dose-finding problem, we proposed an acquisition function to efficiently select the next dose while controlling the risk of overdose administration as well as methods for specifying the priors on the basis of limited toxicity information. 
The Bayesian nonparametric model for the dose-toxicity curve using the Gaussian process prior is flexible in terms of eliminating the risk of model misspecification and efficient in terms of sharing toxicity information across multiple doses, thereby addressing the drawbacks of model-based and model-assisted designs.
The proposed LSE design performed as well as or better than the CRM, BOIN, and BO designs in terms of both MTD selection accuracy and overdose prevention in many scenarios with various dose-toxicity profiles. 

Another advantage of the proposed LSE design is that it allows the inference of the toxicity probabilities even for a particular dose that is not included in the tested doses $d_1,\dots,d_J$.
Accordingly, we can estimate the MTD $x^*$ that may fall between the tested doses, so that the estimation of the MTD is built into the LSE design, in contrast to many model-assisted designs.
This advantage is due to the fact that the Gaussian process model for the dose-toxicity curve can handle any continuous input $x\in\mathcal{X}$.
This estimation of the MTD provides useful information to determine the recommended dose in subsequent clinical trials.
For example, accurate MTD estimation can contribute toward the detection of an incorrect dose specification during the course of phase I clinical trials.
In some cases, the tested doses do not include those whose toxicity probabilities are not close to the target probability, which is referred to as a misspecification of the tested doses \cite{Hu2013-nw,Chu2016-vz}.
If the entire dose space is classified as $L$ or $H$, this may imply a misspecification of the tested doses.
Furthermore, when $p_n(d_j)$ is too large and $p_n(d_{j+1})$ is too small for two adjacent doses, it also implies a misspecification, i.e., there might be a more appropriate dose close to the MTD between these two doses.
Accurate estimation of the MTD may also be particularly useful if the dose setting can be modified on the basis of the estimation of the MTD $x^*$ in a trial with dose expansion cohorts involving more than one dose to further investigate the toxicity profile or a phase II trial to evaluate the treatment efficacy after phase I dose exploration \cite{iasonos2013design}.

In contrast to the CRM and BOIN designs whose dose-selection strategy is based on point estimation, the proposed design determines the next dose on the basis of the uncertainty in the posterior distribution of $\pi$, similar to the BO design, possibly resulting in a more efficient dose-selection rule.
The CRM design summarizes information on the posterior distribution of toxicity probabilities across doses into a point estimate of the posterior mean.
Accordingly, there may be a loss of information and an increase in the risk of selecting too toxic or subtherapeutic doses, as posterior distributions showing different toxicities may have the same posterior mean.
Although it might be possible for the CRM to calculate a similar acquisition function on the basis of the uncertainty of the posterior distribution and use it to develop a dose-selection strategy, it is not clear whether flexible dose selection is possible because the parametric model in the CRM is less flexible than a non-parametric Gaussian process model.

For the proposed LSE design, a possible reason for a lower risk of allocating too toxic doses compared to the other designs is the introduction of the weight $p(d)^r$ in our acquisition function \eqref{eq: acquisition} to avoid overdosing.
Such a weight is often introduced to avoid searching in dangerous areas in LSE applications \cite{snoek2013bayesian,gardner2014bayesian}.
However, we must specify the value of the parameter $r$ before starting the trial.
The sensitivity analysis of $r$ in Section \ref{sec: sensitivity about gamma} showed that increasing the value of $r$ reduces the risk of overdosing and improves the accuracy of MTD selection in some scenarios.
Furthermore, as the value of $r$ increases, the proportion of patients allocated to the MTD decreases in scenarios where the MTD is among higher doses.
Considering the trade-off between accuracy and safety in dose allocation, we recommend $r=1$ for a $\theta$ range of 0.2– 0.3.
The value of $r$ can be calibrated to maximize some appropriate criterion under various possible toxicity scenarios before starting the trial.
As a possible criterion, we can consider a composite index, such as $w_1 \mathrm{PCS} + w_2 \mathrm{PCA} - w_3 \mathrm{POS} - w_4 \mathrm{POA}$, where $w_1,w_2,w_3,w_4 \geq 0$ are predetermined parameters that adjust the balance of each performance index.
In practice, simulations using the MCMC method are time-consuming; hence, it might be feasible to use some approximation method such as the Laplace approximation or the expectation propagation to derive the posterior distribution.
See \cite{williams2006gaussian} for details on these approximation methods.

Although we have employed the acquisition function based on the misclassification probability, other acquisition functions can also be considered in our design.
One is a classification ambiguity acquisition function \cite{Gotovos2013} that quantifies the uncertainty in classification on the basis of the credible interval of the posterior distribution.
We observed no major differences in performance with respect to dose identification by acquisition functions based on the misclassification probability and those based on the classification ambiguity (see Appendix \ref{appendix: Acquisition functions}).
As a more sophisticated acquisition function, a look-ahead acquisition function has been proposed \cite{letham2022look}; however, it may not perform well in our situation with a simple monotonic function $\pi$ for a one-dimensional input.
Indeed, our exploratory simulation experiments showed that the look-ahead acquisition function tends to select a dose that has not been previously administered as the next dose, resulting in an increased risk of overdosing (data not shown).
Further study is warranted to investigate how the performance of the design is affected by the choice of the acquisition function.

Although we have considered the problem of MTD identification in the context of phase I clinical trials of single treatments, the proposed LSE design can be extended to more complex designs.
The method can be easily extended to drug combination trial designs \cite{Mandrekar2014-xg} by considering a Gaussian process model with two-dimensional inputs to search for the level set of doses whose toxicity probabilities are equal to $\theta$, i.e., the MTD contour.
Moreover, recent experimental therapies, such as molecular targeted agents, immunotherapy, and radiation therapies, require evaluation of the efficacy and immune response as well as toxicity.
In such cases, the objective of the trial is to estimate an optimal biological dose, not the MTD \cite{Postel-Vinay2016-bt}.
In the future, we will consider extending the proposed design to such complex clinical trial designs.
%

%===============================
\section*{Acknowledgments}
%===============================

This work was supported by Grants-in-Aid from the Japan Society for the Promotion of Science (JSPS) for Scientific Research (KAKENHI grant nos. JP20K19871 and JP24K20836 to K.M., JP23K16943 to Y.I.).

\bibliographystyle{unsrt}
\bibliography{references}

\clearpage
\appendix
\renewcommand{\theequation}{\thesection\arabic{equation}}
\renewcommand{\thefigure}{\thesection\arabic{figure}}
\renewcommand{\thetable}{\thesection\arabic{table}}
\setcounter{equation}{0} % 式番号の初期化
\setcounter{figure}{0} % 図番号の初期化
\setcounter{table}{0} % 表番号の初期化

%=======================================================================================
\section{Proof of Theorem~\ref{theorem:main_theorem}}\label{appendix: proof of theorem}
%=======================================================================================

\begin{lemma}
    \label{lemma:acquisition_bound}
    For any $n \ge 1$, the following holds:
    \begin{align}
        \label{eq:acquisition_bound}
        \alpha_n(x_n) \le \tilde{\beta}_n^{1/2} \tilde{\sigma}_{n-1}(x_n)
    \end{align}
\end{lemma}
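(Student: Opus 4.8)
The plan is to establish the pointwise inequality $\alpha_n(x) \le \tilde{\beta}_n^{1/2}\tilde{\sigma}_{n-1}(x)$ for every $x \in \mathcal{X}$, from which the stated bound at the selected dose $x_n$ follows immediately (no appeal to the maximization defining $x_n$ is even needed). The key observation is purely algebraic: the classification-ambiguity acquisition function is a minimum of two quantities, and a minimum of two reals never exceeds their arithmetic mean.

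First I would unfold the definitions of the confidence bounds. Setting $a := \mathrm{ucb}_n(x) - \theta = \mu_{n-1}(x) - \theta + \tilde{\beta}_n^{1/2}\tilde{\sigma}_{n-1}(x)$ and $b := \theta - \mathrm{lcb}_n(x) = \theta - \mu_{n-1}(x) + \tilde{\beta}_n^{1/2}\tilde{\sigma}_{n-1}(x)$, the acquisition function \eqref{eq:confidence_bound-based_acquisition} reads $\alpha_n(x) = \min\{a, b\}$. When the two arguments are summed, the posterior-mean terms $\mu_{n-1}(x)$ and $-\mu_{n-1}(x)$ cancel, as do the threshold terms $-\theta$ and $+\theta$, leaving $a + b = 2\tilde{\beta}_n^{1/2}\tilde{\sigma}_{n-1}(x)$.

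Next I would invoke the elementary fact $\min\{a, b\} \le (a + b)/2$, valid for any reals. Combining this with the computation of $a + b$ gives
\begin{align*}
\alpha_n(x) = \min\{a, b\} \le \frac{a + b}{2} = \tilde{\beta}_n^{1/2}\tilde{\sigma}_{n-1}(x),
\end{align*}
which holds for all $x$ and in particular for $x = x_n$, yielding \eqref{eq:acquisition_bound}.

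There is no substantive obstacle here; the result is a direct consequence of the definition of the acquisition function together with the symmetry of the credible interval about its center $\mu_{n-1}(x)$. The only point worth stating explicitly is that the cancellation making $a + b$ independent of both $\mu_{n-1}(x)$ and $\theta$ is precisely what isolates the posterior standard deviation as the bound; this is the feature that later allows the termination argument to couple $\sum_n \alpha_n(x_n)$ to the maximum-information-gain quantity $\gamma_N$.
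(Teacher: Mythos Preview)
Your proof is correct and essentially identical to the paper's own argument: the paper writes $\alpha_n(x_n) = \tilde{\beta}_n^{1/2}\tilde{\sigma}_{n-1}(x_n) - |\mu_{n-1}(x_n) - \theta| \le \tilde{\beta}_n^{1/2}\tilde{\sigma}_{n-1}(x_n)$, which is just your $\min\{a,b\} \le (a+b)/2$ inequality with the gap $|a-b|/2 = |\mu_{n-1}(x_n) - \theta|$ made explicit.
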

\begin{proof}
    \begin{align*}
        \alpha_n(x_n) &= \min\{\mathrm{ucb}_{n}(x_n) - \theta, \theta - \mathrm{lcb}_{n}(x_n) \} \\ 
        &= \min\{\mu_{n-1}(x_n) + \tilde{\beta}_n^{1/2} \tilde{\sigma}_{n-1}(x_n) - \theta, \theta - \mu_{n-1}(x_n) + \tilde{\beta}_n^{1/2} \tilde{\sigma}_{n-1}(x_n) \} \\ 
        &= \tilde{\beta}_n^{1/2}\tilde{\sigma}_{n-1}(x_n) - |\mu_{n-1}(x_n)  - \theta| \\ 
        &\le \tilde{\beta}_n^{1/2}\tilde{\sigma}_{n-1}(x_n)
    \end{align*}
\end{proof}

\begin{lemma}
    \label{lemma:acquisition_bound2}
    While running the LSE algorithm with $\tilde{\beta}_n$ as in \eqref{eq:confidence_bound_coefficient}, for all $n \in \mathbb{N}$, there exists $n^\prime \le n$ such that 
    \begin{align}
        \alpha_{n^\prime}(x_{n^\prime}) \le \sqrt{\frac{C_1 \tilde{\beta}_n \gamma_n}{n}}
    \end{align}
    where $C_1 = \frac{1}{\log(1 + \lambda^{-2})}$. 
\end{lemma}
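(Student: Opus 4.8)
The plan is to show that the smallest acquisition value attained among the first $n$ selected points is controlled by the accumulated posterior uncertainty, which the maximum information gain $\gamma_n$ in turn caps. The starting point is the elementary fact that a minimum is dominated by an average. It suffices to treat the case where every $\alpha_{n'}(x_{n'})$ is positive, since otherwise $\min_{n' \le n}\alpha_{n'}(x_{n'}) \le 0$ and the right-hand side is nonnegative, making the bound immediate. In the positive case the square of the minimum equals the minimum of the squares, so
\begin{align*}
    \left(\min_{n' \le n} \alpha_{n'}(x_{n'})\right)^2 \le \frac{1}{n}\sum_{n'=1}^n \alpha_{n'}(x_{n'})^2,
\end{align*}
and it remains to bound the right-hand side.

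First I would invoke Lemma~\ref{lemma:acquisition_bound}, which gives $\alpha_{n'}(x_{n'})^2 \le \tilde{\beta}_{n'}\,\tilde{\sigma}_{n'-1}^2(x_{n'})$ for each $n' \le n$. Because $\gamma_{n-1}$ is non-decreasing in $n$, the coefficient $\tilde{\beta}_n$ defined in \eqref{eq:confidence_bound_coefficient} is non-decreasing as well, so $\tilde{\beta}_{n'} \le \tilde{\beta}_n$ for all $n' \le n$. Pulling this common factor out yields
\begin{align*}
    \left(\min_{n' \le n} \alpha_{n'}(x_{n'})\right)^2 \le \frac{\tilde{\beta}_n}{n}\sum_{n'=1}^n \tilde{\sigma}_{n'-1}^2(x_{n'}).
\end{align*}

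The crux, which I expect to be the main obstacle, is to bound $\sum_{n'=1}^n \tilde{\sigma}_{n'-1}^2(x_{n'})$ by a multiple of $\gamma_n$; this is where the information-theoretic structure enters. The key is the elementary inequality $s \le \frac{\lambda^{-1}}{\log(1+\lambda^{-1})}\log(1+s)$, valid for $s \in [0,\lambda^{-1}]$ because $s/\log(1+s)$ is increasing. Since the kernel is normalized so that $k(x,x)\le 1$, we have $\tilde{\sigma}_{n'-1}^2(x_{n'}) \le 1$ and hence $\lambda^{-1}\tilde{\sigma}_{n'-1}^2(x_{n'}) \in [0,\lambda^{-1}]$; applying the inequality with $s = \lambda^{-1}\tilde{\sigma}_{n'-1}^2(x_{n'})$ and summing gives
\begin{align*}
    \sum_{n'=1}^n \tilde{\sigma}_{n'-1}^2(x_{n'}) \le \frac{1}{\log(1+\lambda^{-1})}\sum_{n'=1}^n \log\!\left(1 + \lambda^{-1}\tilde{\sigma}_{n'-1}^2(x_{n'})\right).
\end{align*}
The sum on the right equals $\log\det(\bm{I}_n + \lambda^{-1}\bm{K}_n)$ by the standard GP variance recursion, hence is at most $2\gamma_n$ by the definition \eqref{eq:maximum_information_gain}. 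Substituting this, absorbing the numerical factors (and the factor arising from the normalized-kernel bound) into $C_1$, and taking square roots then delivers the claimed inequality. The only remaining care is bookkeeping of constants to match the stated value of $C_1$; the structural argument is complete once the variance-to-information-gain passage above is in place.
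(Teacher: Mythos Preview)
Your proposal is correct and follows essentially the same route as the paper: apply Lemma~\ref{lemma:acquisition_bound}, use the monotonicity of $\tilde{\beta}_n$ to factor it out, convert $\sum \tilde{\sigma}_{n'-1}^2$ to a log-determinant via the $s \le c\log(1+s)$ inequality (using the normalized-kernel bound $\tilde{\sigma}^2 \le 1$), bound by $\gamma_n$, and finish with the minimum-versus-average argument. Your extra care with the case $\alpha_{n'}(x_{n'}) \le 0$ and with the factor of $2$ in the definition of $\gamma_n$ is appropriate; the paper's own proof is looser on these bookkeeping points (and indeed has a $\lambda^{-1}$ vs.\ $\lambda^{-2}$ inconsistency that your version avoids).
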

\begin{proof}
    From Lemma~\ref{lemma:acquisition_bound}, for any $i \ge 1$, we have 
    \begin{align*}
        \alpha_i^2(x_i) 
        &\le \tilde{\beta}_i \tilde{\sigma}_{i-1}^2(x_i) \\ 
        &= \tilde{\beta}_i \lambda^2 \left(\lambda^{-2} \tilde{\sigma}_{i-1}^2(x_i) \right) \\ 
        &\le \tilde{\beta}_i \lambda^2 \frac{\lambda^{-2}}{\ln(1 + \lambda^{-2})} \ln \left(1 + \lambda^{-2} \tilde{\sigma}_{i-1}^2(x_i) \right). 
    \end{align*}
    As $\tilde{\beta}_n$ is monotonically increasing with respect to $n$, the following holds for any $n \ge 1$:
    \begin{align*}
        \sum_{i=1}^n \alpha_i^2(x) 
        &\le \tilde{\beta}_n \frac{1}{\log(1 + \lambda^{-2})} 
        \underbrace{\sum_{i=1}^n \ln \left(1 + \lambda^{-2}\tilde{\sigma}_{i-1}^2(x_i)  \right)}_{=\ln\det( \bm{I}_n + \lambda^{-1} \bm{K}_n)} \\ 
        &\le \tilde{\beta}_n \frac{1}{\log(1 + \lambda^{-2})} 
        \underbrace{\max_{(x_1, ..., x_n) \in \mathcal{X}} \ln \det \left( \bm{I}_n + \lambda^{-1} \bm{K}_n \right)}_{=\gamma_n}. 
    \end{align*}
    Let $n^* = \argmin_{n^\prime \le n} \alpha_{n^\prime}^2(x_{n^\prime})$; then, it is obvious that
    \begin{align*}
        \sum_{i=1}^n \alpha_i^2(x_i) \ge n \alpha_{n^*}^2(x_{n^*}). 
    \end{align*}
    Hence, if we set $C_1 = \frac{1}{\log(1 + \lambda^{-2})}$, 
    \begin{align*}
        \alpha_{n^*}^2(x_{n^*}) \le \frac{C_1\tilde{\beta}_n\gamma_n}{n}
    \end{align*}
    holds and we obtain the claim of the lemma. 
\end{proof}

\begin{lemma}
    \label{lemma:undetermined_set}
    While running the LSE algorithm, if there exists $n \ge 1$ such that $\alpha_n(x_n) \le \xi$, then the undetermined set at $n+1$ is empty, i.e., $U_{n+1} = \emptyset$. 
\end{lemma}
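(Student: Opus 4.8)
The plan is to recast the assertion $U_{n+1} = \emptyset$ as a uniform bound on the acquisition function and then push the hypothesis $\alpha_n(x_n) \le \xi$ forward by one observation. First I would unfold the confidence-bound-based classification rule: a point $x$ survives in the undetermined set at iteration $n+1$ exactly when it is assigned to neither $L$ nor $H$, i.e. when $\mathrm{ucb}_{n+1}(x) \ge \theta + \xi$ and $\mathrm{lcb}_{n+1}(x) \le \theta - \xi$ hold simultaneously. Reading off the definition of $\alpha_{n+1}$ in \eqref{eq:confidence_bound-based_acquisition}, these two inequalities are together equivalent to $\alpha_{n+1}(x) \ge \xi$. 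Hence $U_{n+1} = \emptyset$ is equivalent to the statement that $\alpha_{n+1}(x) < \xi$ for every $x \in \mathcal{X}$, and it suffices to control $\max_{x} \alpha_{n+1}(x)$.

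Next I would exploit the greedy selection rule. Since $x_n = \argmax_{x} \alpha_n(x)$, the hypothesis $\alpha_n(x_n) \le \xi$ upgrades immediately to the uniform bound $\alpha_n(x) \le \xi$ for all $x \in \mathcal{X}$. The remaining task is to transfer this bound from step $n$ to step $n+1$, i.e. to show that incorporating the observation at $x_n$ cannot enlarge the acquisition value: $\alpha_{n+1}(x) \le \alpha_n(x)$ for every $x$. This is where the argument rests on the monotone behaviour of the surrogate confidence intervals: as the datum at $x_n$ is added, the upper bound $\mathrm{ucb}(x)$ moves down and the lower bound $\mathrm{lcb}(x)$ moves up, so that both arguments of the minimum defining $\alpha$ shrink, and therefore so does $\alpha$ itself. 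Chaining the inequalities gives $\alpha_{n+1}(x) \le \alpha_n(x) \le \xi$ for all $x$, which combined with the equivalence established in the first step forces $U_{n+1} = \emptyset$.

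The main obstacle I anticipate is precisely the passage $\alpha_{n+1}(x) \le \alpha_n(x)$, together with the strict-versus-non-strict subtlety at the classification margin. The coefficient $\tilde{\beta}_n^{1/2}$ in \eqref{eq:confidence_bound_coefficient} is increasing in $n$, so the non-increase of the confidence half-width $\tilde{\beta}_n^{1/2}\tilde{\sigma}_{n-1}(x)$ is not automatic from posterior-variance reduction alone; it must be secured through the nested (intersected) confidence-region construction standard in LSE, under which the retained interval can only shrink. This same construction should also supply the strict inequality needed to exclude the boundary case $\alpha_{n+1}(x) = \xi$: because the kernel is strictly positive definite under the regularity assumptions, the fresh observation at $x_n$ strictly reduces the posterior variance, so the non-strict bound $\alpha_n(x) \le \xi$ sharpens to the strict bound $\alpha_{n+1}(x) < \xi$ demanded by the (strict) classification rule. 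Once this monotonicity-with-strictness step is in place, the conclusion $U_{n+1} = \emptyset$ follows directly.
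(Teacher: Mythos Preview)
Your first two steps already contain the paper's entire argument, but you then take an unnecessary detour that introduces a genuine gap. The paper reads ``the undetermined set at $n+1$'' as the set left unclassified by the step-$n$ rule, i.e.\ using $\mathrm{ucb}_n$ and $\mathrm{lcb}_n$ (built from the first $n-1$ observations), not the step-$(n+1)$ bounds. Under that reading, once greedy selection gives $\alpha_n(x)\le\xi$ for all $x$, you are finished: $\min\{\mathrm{ucb}_n(x)-\theta,\ \theta-\mathrm{lcb}_n(x)\}\le\xi$ says directly that one of the two classification conditions holds for every $x$ (up to the strict/non-strict boundary, which the paper simply ignores). The paper's proof is exactly this two-line computation and stops there.

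Your third step, transferring the bound from $\alpha_n$ to $\alpha_{n+1}$, is where the proposal breaks. You invoke the ``nested (intersected) confidence-region construction standard in LSE'' to force monotone shrinkage of the intervals, but this paper does \emph{not} use intersected regions: its $\mathrm{ucb}_n(x)=\mu_{n-1}(x)+\tilde{\beta}_n^{1/2}\tilde{\sigma}_{n-1}(x)$ is a raw, non-nested bound. Without intersection there is no reason $\alpha_{n+1}(x)\le\alpha_n(x)$ should hold: the coefficient $\tilde{\beta}_n^{1/2}$ is increasing, and the center $\mu_{n-1}(x)$ moves with the random observation $\tilde{y}_n$, so both the width and the location of the interval can shift adversely. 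Your posterior-variance-reduction argument controls neither of these. Drop the transfer step entirely, interpret $U_{n+1}$ via the step-$n$ bounds, and the proof reduces to the one-line consequence of $x_n=\argmax_x\alpha_n(x)$ that you already wrote down in your second paragraph.
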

\begin{proof}
    By the definition of $\alpha_n(x)$, it follows that 
    \begin{align*}
        \xi 
        &\ge \alpha_n(x_n) \\
        &= \max_{x \in \mathcal{X}} \min \{\mathrm{ucb}_n(x) - \theta, \theta - \mathrm{lcb}_n(x)\} \\ 
        &\ge \min \{\mathrm{ucb}_n(x) - \theta, \theta - \mathrm{lcb}_n(x)\}, ~~~ \forall x \in \mathcal{X}
    \end{align*}
    This implies that for any $x \in \mathcal{X}$, either $\mathrm{ucb}_n(x) - \theta \le \xi$ or $\theta - \mathrm{lcb}_n(x) \le \xi$ is always true. 
    The former means that $x$ is an element of the superlevelset $H$, while the latter means that $x$ is an element of the sublevelset $L$. 
    Thus, if $\alpha_n(x_n) \le \xi$ holds, all $x \in \mathcal{X}$ are classified into $H$ or $L$ and the undetermined set is an empty set.  
\end{proof}
Then we have the following statement from Lemmas~\ref{lemma:acquisition_bound2} and \ref{lemma:undetermined_set}.
\begin{corollary}
    \label{corollary:LSE_termination}
    The LSE algorithm terminates after at most $N^\prime$ observations, where $N^\prime$ is the smallest positive integer satisfying 
    \begin{align*}
        N^\prime \ge \frac{C_1 \tilde{\beta}_{N^\prime} \gamma_{N^\prime}}{\xi^2}
    \end{align*}
\end{corollary}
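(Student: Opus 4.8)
The plan is to derive the corollary as an immediate consequence of Lemma~\ref{lemma:acquisition_bound2} and Lemma~\ref{lemma:undetermined_set}, chaining them through the inequality that defines $N^\prime$. The one conceptual observation driving everything is that the defining inequality $N^\prime \ge C_1 \tilde{\beta}_{N^\prime}\gamma_{N^\prime}/\xi^2$ is exactly the condition under which the per-step acquisition bound of Lemma~\ref{lemma:acquisition_bound2}, evaluated at $n = N^\prime$, falls at or below the classification margin $\xi$.

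First I would rewrite the defining inequality. Since all quantities are positive, $N^\prime \ge C_1 \tilde{\beta}_{N^\prime}\gamma_{N^\prime}/\xi^2$ is equivalent to $\sqrt{C_1 \tilde{\beta}_{N^\prime}\gamma_{N^\prime}/N^\prime} \le \xi$. Next I would apply Lemma~\ref{lemma:acquisition_bound2} with $n = N^\prime$, which furnishes an index $n^\prime \le N^\prime$ satisfying $\alpha_{n^\prime}(x_{n^\prime}) \le \sqrt{C_1 \tilde{\beta}_{N^\prime}\gamma_{N^\prime}/N^\prime}$. Combining the two gives $\alpha_{n^\prime}(x_{n^\prime}) \le \xi$ for some $n^\prime \le N^\prime$. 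Finally I would invoke Lemma~\ref{lemma:undetermined_set} at this index: since $\alpha_{n^\prime}(x_{n^\prime}) \le \xi$, the undetermined set $U_{n^\prime+1}$ is empty, so the \texttt{while} loop of Algorithm~\ref{alg: ALSE algorithm} stops; because $n^\prime \le N^\prime$, this happens within $N^\prime$ observations, which is the claim.

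The argument is essentially bookkeeping, so I do not expect a substantive obstacle; the point demanding care is the indexing. Lemma~\ref{lemma:acquisition_bound2} produces an $n^\prime$ somewhere at or below $N^\prime$ rather than exactly at $N^\prime$, and the emptiness guaranteed by Lemma~\ref{lemma:undetermined_set} is stated for $U_{n^\prime+1}$, so I must track these offsets carefully to confirm that the total number of observations is bounded by $N^\prime$ itself and not by $N^\prime + 1$. This hinges on the convention, implicit in the reformulated algorithm, that $\alpha_{n^\prime}$ and the classification producing $U_{n^\prime+1}$ are computed from the first $n^\prime$ observations, so that an empty undetermined set aborts the loop before any further point is queried.

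Separately, I would remark that the smallest integer $N^\prime$ is well defined, i.e.\ finite. This follows from the sublinear growth of $\gamma_N$ (and hence of $\tilde{\beta}_N$, which depends on $\gamma_{N-1}$ only through a square root plus constants) noted after Theorem~\ref{theorem:main_theorem}: the right-hand side $C_1 \tilde{\beta}_N \gamma_N/\xi^2$ is eventually dominated by the linear term $N$, so the inequality holds for all sufficiently large $N$ and a smallest such integer exists. This existence is what ultimately makes the termination bound meaningful, and it is the ingredient the main theorem combines with the present corollary to conclude termination in finitely many trials.
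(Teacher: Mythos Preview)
Your proposal is correct and matches the paper's own argument exactly: the paper simply states that the corollary follows from Lemma~\ref{lemma:acquisition_bound2} and Lemma~\ref{lemma:undetermined_set} without further elaboration, and your write-up is a faithful unpacking of that one-line claim. Your additional remarks on the indexing conventions and on the finiteness of $N^\prime$ via sublinearity of $\gamma_N$ are accurate and mirror what the paper says immediately after Theorem~\ref{theorem:main_theorem}.
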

Theorem 1 is the direct consequence of Lemma~\ref{lemma:undetermined_set} and Corollary~\ref{corollary:LSE_termination}.

\clearpage
%===========================================================
\section{Gaussian process prior}\label{appendix: GP prior}
%===========================================================

\begin{figure}[htbp]
\begin{center}
\includegraphics[width=0.85\linewidth]{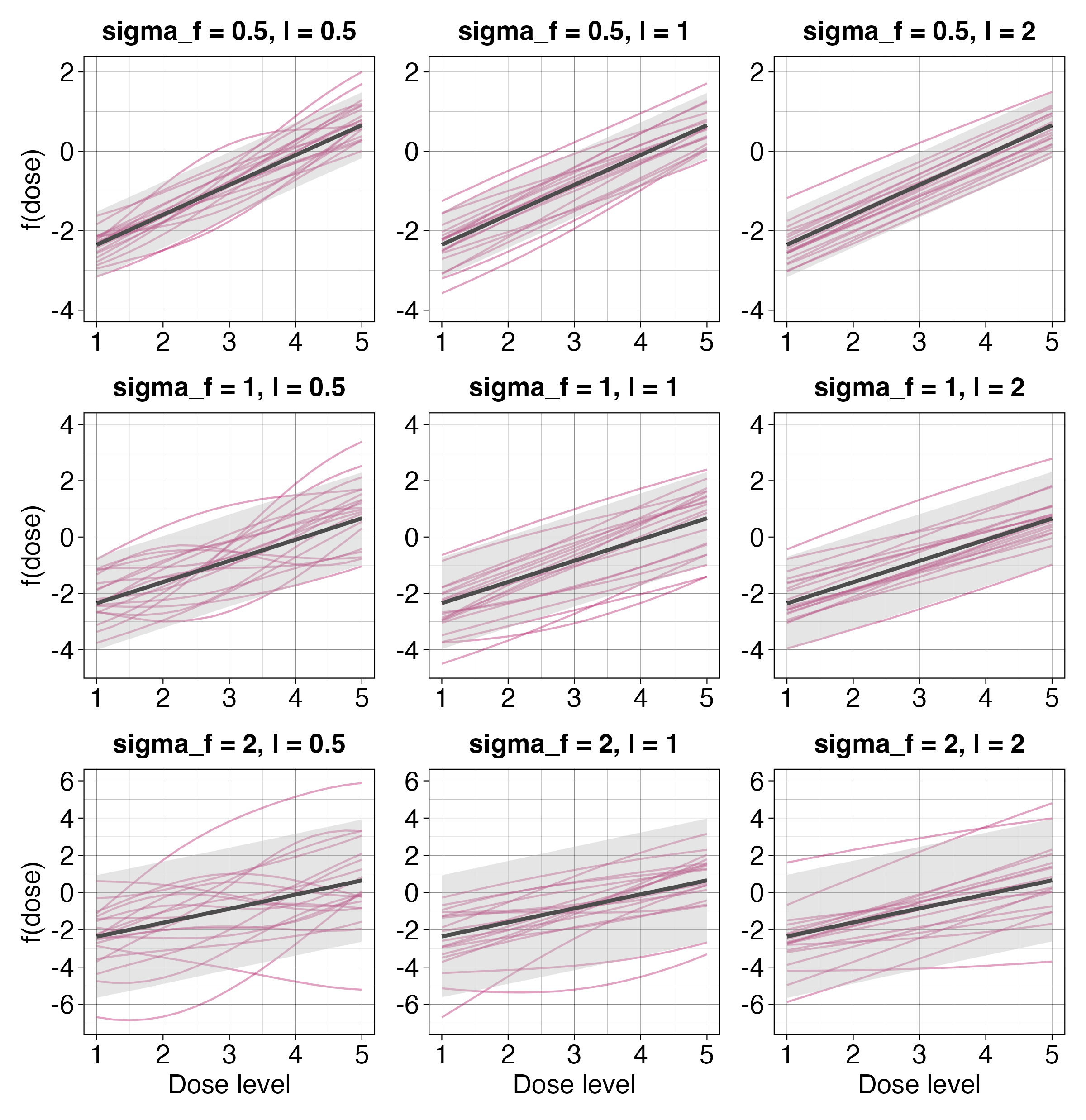}
\caption{Gaussian process prior for the latent function $f(x)$ with the hyperparameters $\sigma_f$ and $\ell$ set to 0.5, 1, and 2 respectively. The bold line represents the mean and the red line represents the 20 random samples from the Gaussian process prior. The shaded area represents the 90\% credible interval.}
\label{fig: gp_prior_f}
\end{center}
\end{figure}

\begin{figure}[htbp]
\begin{center}
\includegraphics[width=0.85\linewidth]{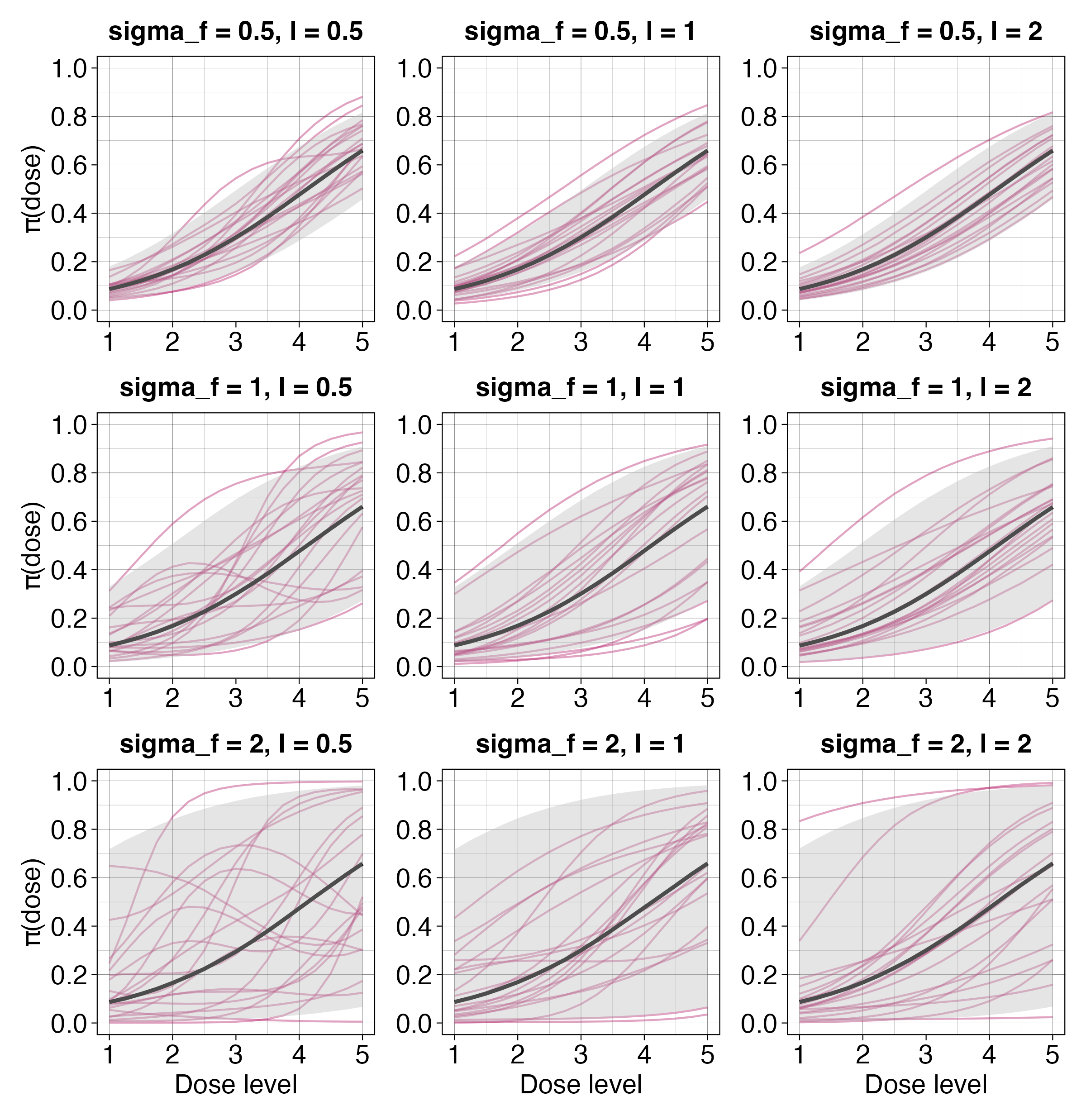}
\caption{Prior for the toxicity probability $\pi(x)$ with the hyperparameters $\sigma_f$ and $\ell$ set to 0.5, 1 and 2 respectively. The bold line represents the median and the red line represents the 20 random samples from the Gaussian process prior, obtained by the inverse logit transformation of the samples for $f$ in Figure \ref{fig: gp_prior_f}. The shaded area represents the 90\% credible interval.}
\label{fig: gp_prior_pi}
\end{center}
\end{figure}

\clearpage
%===========================================================================
\section{Overview of existing designs and their detailed settings in the simulations}\label{appendix: Detailed settings}
%===========================================================================
%-------------------
\subsection{CRM}
%-------------------
In the CRM, we consider the power model $\pi(d_j) = a_j^{\exp(\beta)}$, for $j=1,\dots,J$, where $\beta$ is a hyperparameter and $a_1 < a_2 < \dots < a_J$ are initial guesses of the toxicity probability at each dose, referred to as the skeleton.
We select the skeleton using the systematic approach of Lee and Cheung \cite{Lee2009-la}.
We set the prior MTD to dose 3, which is the intermediate dose, and the half-width of the indifference interval to 0.05.
Consequently, the skeleton is selected as $(0.05, 0.11, 0.20, 0.31, 0.42)$ for $\theta=0.2$ and $(0.12, 0.20, 0.30, 0.40, 0.50)$ for $\theta=0.3$.
We assume a prior $\beta \sim N(0,2)$.

The next dose is basically the dose with the posterior mean of the toxicity probability closest to $\theta$.
However, for safety reasons, we do not allow dose skipping when increasing or decreasing the dose in the simulation.
In addition, we stop the trial when the condition $\mathrm{Pr}(\pi(d_1) \geq \theta \mid D_n) \geq 0.9$ is met.
%

%------------------
\subsection{BOIN}
%------------------
In the BOIN design, the next dose is determined on the basis of the boundaries $\lambda_\mathrm{e}, \lambda_\mathrm{d}$ for dose escalation and deceleration and the estimated toxicity probability $\hat{\pi}(d_j)$ at the current dose $j$.
The next dose is determined to be the next lower dose if $\hat{\pi}(d_j) \geq \lambda_\mathrm{d}$ or the next higher dose if $\hat{\pi}(d_j) \leq \lambda_\mathrm{e}$; otherwise, it is the same dose as the current dose.
Boundaries $\lambda_\mathrm{e}, \lambda_\mathrm{d}$ are derived using the predetermined values $\theta_1, \theta_2$, where $\theta_1$ is the highest probability of toxicity that is considered as underdosing and $\theta_2$ is the lowest probability that is considered as overdosing.
Following Liu and Yuan \cite{liu2015bayesian}, we set  $\theta_1 = 0.6\theta$ and $\theta_2 = 1.4\theta$ in the simulation.

We apply the default dose elimination and stopping rule used in the \texttt{BOIN} R package.
In addition, we also stop the trial if the condition $\mathrm{Pr}(\pi(d_1) \geq \theta \mid D_n) \geq 0.9$ is met.
%

%-----------------------
\subsection{BO design}
%-----------------------
In the BO design, the dose-toxicity relationship is modeled using \eqref{eq: model}–\eqref{eq: likelihood}, as in the original study \cite{Takahashi2021-MTD}.
As with the proposed design, the doses are set as $(d_1, d_2, d_3, d_4, d_5)=(0.00, 0.25, 0.5, 0.75, 1.00)$.
The dose-finding strategy of the BO design is based on a Bayesian optimization framework.
It assumes $g(d_j) = |\pi(d_j) - \theta|$ as the objective function that should be minimized.
Once the data are obtained, it calculates the value of the acquisition function based on the updated posterior distributions of the probability of toxicity at each dose.
As the acquisition function, the expected improvement (EI) is used, which is given as follows:
\begin{align*}
    \alpha^{\mathrm{EI}}_n(d) = E_g[\max\{0,g^+ -g(d)\}\mid D_n],
\end{align*}
where $g^+ = \min_d E_g[g(d)\mid D_n]$ is the current best value of $g$.
It follows that the value of the EI will be larger at doses that are expected to take even smaller values of $g$ than the current best.
For overdose control, the BO design considers the admissible dose set $A_n'$ and the next dose $x_{n+1}$ is determined as $x_{n+1} = \argmax_{d \in A_n'}\alpha^{\mathrm{EI}}_n(d)$.
The original study assumed four conditions that the doses in $A_n'$ must satisfy, three of which are also used in our design (see $A_n$ in Section \ref{sec: Dose-finding strategy}).
The remaining condition, ``if two or more patients experience DLT, all doses included in $A_n'$ are less than or equal to one dose lower than the current dose,'' is completely based on the rule-based idea such as 3+3 design, and it may lead to inefficient dose selection.
Therefore, we do not consider this condition and use $A_n$ in our design for the BO design in our simulations with $c_1 = 0.5, c_2 = 0.9$.
The BO design terminates the trial when the maximum sample size is reached and estimates the MTD as follows:
\begin{align*}
    \argmax_{d_j\in\{d_j\mid\widehat{\pi}(d_j)<\theta+\delta_2\}} \mathrm{Pr}(\theta - \delta_1 \leq \pi(d_j) \leq \theta + \delta_1 \mid D_N),
\end{align*}
where $\delta_1, \delta_2$ are pre-specified small values.
We set $\delta_1 = 0.05, ~\delta_2 = 0.1$ as in the original study.

For the covariance function, the BO design uses the squared exponential covariance function \eqref{eq: kernel}.
Although $\sigma_f$ is set to 1 in the original study, we assume a weakly informative prior $\log (\sigma_f) \sim N(0.20, 0.45^2)$ to match the condition with our design. 
We set $\ell$ to 1.
To construct a prior mean function, the original study used the systematic approach of Lee and Cheung \cite{Lee2009-la}, whereas in the simulation, the prior mean function of the BO design is generated using the quantile-based approach with the parameters $q_1 = q_J = 0.1$ in our design, as described in Section \ref{sec: prior mean}.

We apply the same safety stopping rule as in the other designs, i.e., we stop the trial if the condition $\mathrm{Pr}(\pi(d_1) \geq \theta \mid D_n) \geq 0.9$ is met.

\clearpage
%========================================================================================
\section{Acquisition functions for LSE design}\label{appendix: Acquisition functions}
%========================================================================================
\setcounter{table}{0}
\setcounter{figure}{0}
In Section \ref{sec: Dose-finding strategy}, we proposed an acquisition function based on the expected misclassification probability:
\begin{align*}
    \alpha_n^\mathrm{Mis}(d) := p_n(d)^r \min\{p_n(d),1-p_n(d)\}.
\end{align*}
In addition to this acquisition function, various other acquisition functions can be considered and used as the acquisition function in the LSE design.
Among the various types of acquisition functions, this section discusses the classification ambiguity acquisition function \cite{Gotovos2013} described in Section \ref{sec: Theory ALSE algorithm} and presents a comparison of its performance with $\alpha_n^\mathrm{Mis}$.
To control overdosing, as described in Section \ref{sec: Dose-finding strategy}, weights are introduced into the classification ambiguity acquisition function:
\begin{align*}
    \alpha_n^\mathrm{Amb}(d) := p_n(d)^r \min\{U_n(d)-\theta,\theta-L_n(d)\},
\end{align*}
where $[L_n(d),U_n(d)]$ is a credible interval of $\pi(d)$ after obtaining data from $n$ patients and $r \geq 0$ is  a parameter with the same role as that in $\alpha_n^\mathrm{Mis}$.
This acquisition function is also used in many LSE problems.
$\alpha_n^\mathrm{Mis}$ and $\alpha_n^\mathrm{Amb}$ have a similar form; the only difference is that the former uses the information of the posterior distribution of $\pi$ as the area (posterior probability), whereas the latter uses it as the line segment (credible interval).

To evaluate the performance of the LSE design with these two acquisition functions, we performed simulations with the same settings as those mentioned in Section \ref{sec: Simulations}.
The value of the parameter $r$ was set to 1 for both acquisition functions.
The simulation results are shown in Figures \ref{fig: result_C_acq} and \ref{fig: result_O_acq}.

\begin{figure}[t]
\begin{center}
\includegraphics[width=0.75\linewidth]{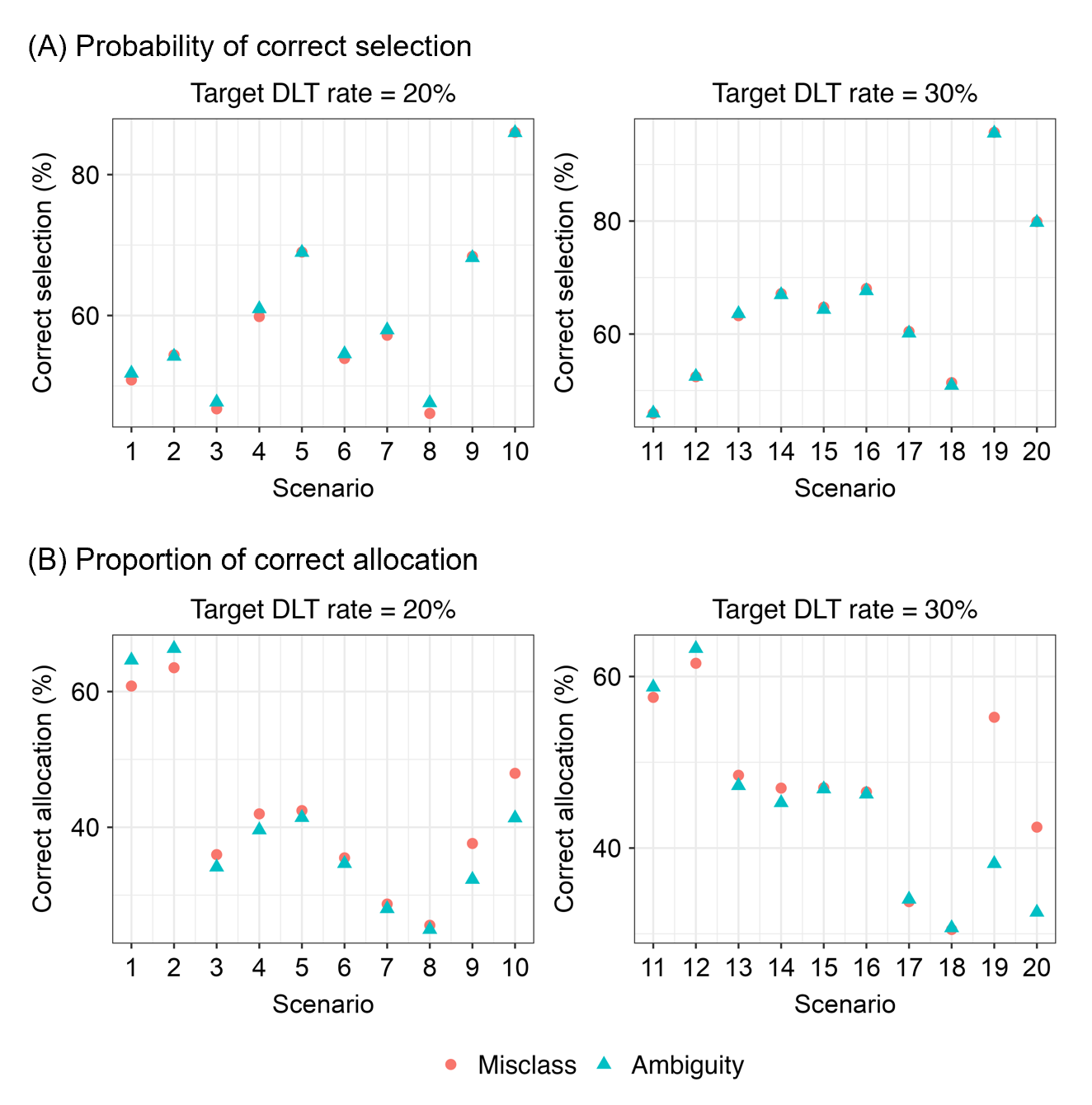}
\caption{Comparison of accuracy for identifying and allocating the MTD for the acquisition functions $\alpha_n^\mathrm{Mis}$ (Misclass) and $\alpha_n^\mathrm{Amb}$ (Ambiguity), with PCS (A) and PCA (B) obtained from 2000 simulations.}
\label{fig: result_C_acq}
\end{center}
\end{figure}

\begin{figure}[t]
\begin{center}
\includegraphics[width=0.75\linewidth]{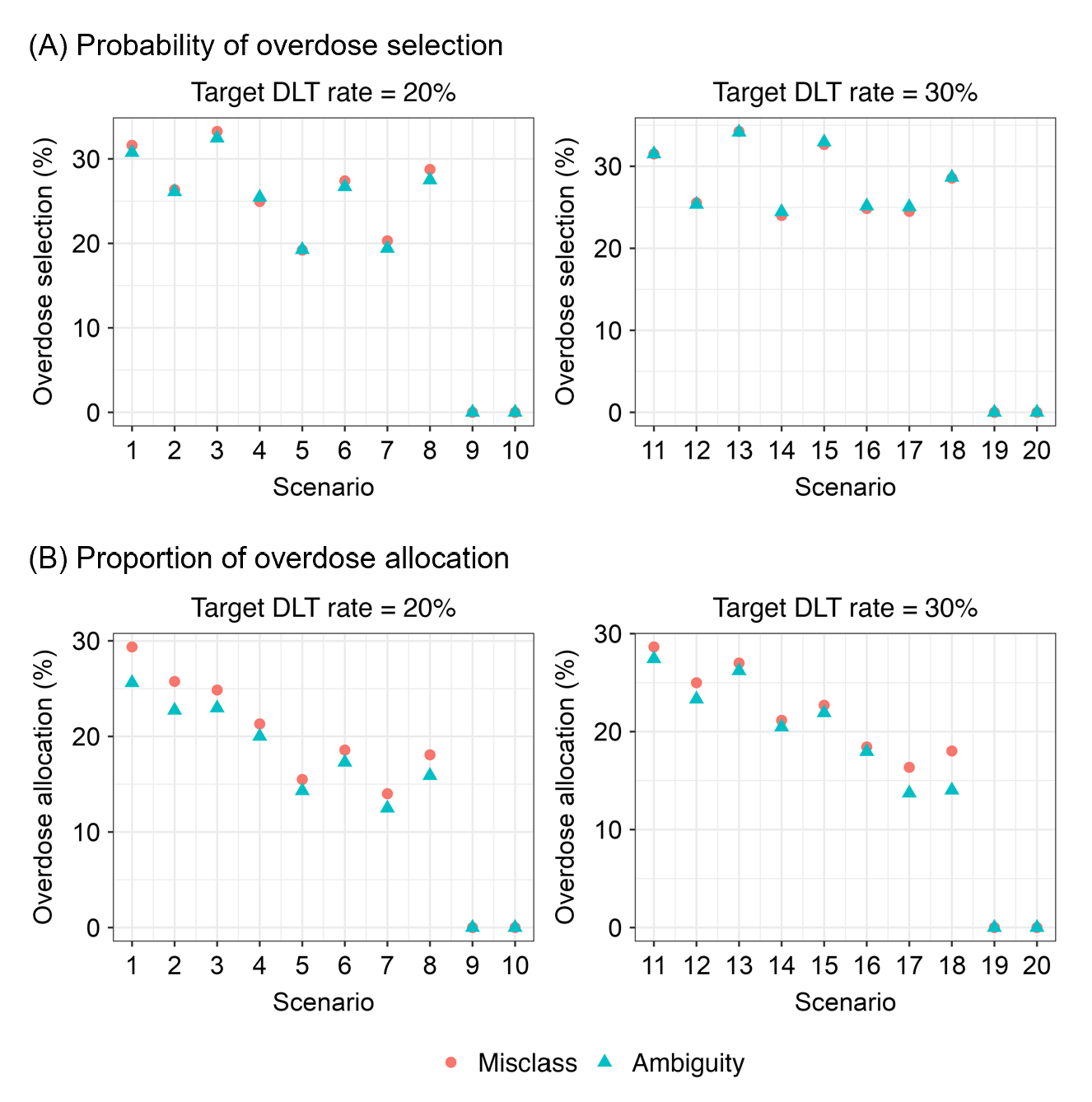}
\caption{Comparison of the risk of overdosing for the acquisition functions $\alpha_n^\mathrm{Mis}$ (Misclass) and $\alpha_n^\mathrm{Amb}$ (Ambiguity), with POS (A) and POA (B) obtained from 2000 simulations.}
\label{fig: result_O_acq}
\end{center}
\end{figure}

\clearpage
%======================================================
\section{Simulation results}\label{appendix: results}
%======================================================
\setcounter{table}{0}
\setcounter{figure}{0}
% LSEDはgamma=1の場合を示す

\begin{threeparttable}[htbp]
\centering 
\caption{Complete results of the simulations in Scenarios 1–10.}
\begin{tabular}{llcccccccccc}
\hline
       &          & \multicolumn{10}{c}{Scenario}                                                 \\ \cline{3-12} 
       & Design   & 1     & 2     & 3     & 4     & 5     & 6     & 7     & 8     & 9     & 10    \\ \hline
PCS\tnote{1}    & LSE($r=0$) & 50.05 & 52.80 & 42.30 & 56.90 & 62.30 & 48.55 & 56.10 & 43.65 & 69.10 & 82.45 \\
       & LSE($r=1$) & 50.85 & 54.40 & 46.75 & 59.85 & 69.00 & 53.90 & 57.20 & 46.10 & 68.40 & 86.00 \\
       & BO       & 49.50 & 53.50 & 44.45 & 57.45 & 66.40 & 51.00 & 56.65 & 46.10 & 69.35 & 84.65 \\
       & CRM      & 42.30 & 46.00 & 43.80 & 58.10 & 63.05 & 47.60 & 56.30 & 47.95 & 75.35 & 86.65 \\
       & BOIN     & 38.95 & 44.35 & 40.70 & 52.95 & 57.50 & 41.20 & 50.15 & 38.90 & 62.40 & 73.55 \\ \hline
PCA\tnote{2}    & LSE($r=0$) & 53.54 & 56.75 & 36.18 & 41.78 & 42.62 & 36.12 & 31.42 & 27.48 & 42.75 & 52.96 \\
       & LSE($r=1$) & 60.81 & 63.51 & 35.97 & 41.98 & 42.46 & 35.50 & 28.69 & 25.56 & 37.61 & 47.95 \\
       & BO       & 57.58 & 60.80 & 37.20 & 42.88 & 43.54 & 36.79 & 29.85 & 26.28 & 39.89 & 50.47 \\
       & CRM      & 45.00 & 47.69 & 36.18 & 43.03 & 42.78 & 36.14 & 32.37 & 31.37 & 46.09 & 53.79 \\
       & BOIN     & 47.63 & 50.07 & 33.89 & 40.46 & 37.50 & 31.10 & 29.19 & 25.33 & 34.64 & 41.94 \\ \hline
POS\tnote{3}    & LSE($r=0$) & 33.10 & 27.75 & 37.00 & 25.45 & 21.10 & 31.55 & 20.50 & 31.90 & 0.00  & 0.00  \\
       & LSE($r=1$) & 31.60 & 26.35 & 33.25 & 24.95 & 19.20 & 27.40 & 20.30 & 28.75 & 0.00  & 0.00  \\
       & BO       & 32.90 & 27.10 & 34.85 & 25.90 & 19.40 & 29.40 & 20.55 & 30.00 & 0.00  & 0.00  \\
       & CRM      & 35.45 & 30.70 & 37.55 & 27.15 & 25.15 & 36.50 & 22.75 & 33.50 & 0.00  & 0.00  \\
       & BOIN     & 29.25 & 23.65 & 29.45 & 21.25 & 17.40 & 26.70 & 18.05 & 26.20 & 0.00  & 0.00  \\ \hline
POA\tnote{4}    & LSE($r=0$) & 36.79 & 32.46 & 31.49 & 28.10 & 21.72 & 24.48 & 18.31 & 22.61 & 0.00  & 0.00  \\
       & LSE($r=1$) & 29.36 & 25.75 & 24.85 & 21.31 & 15.50 & 18.57 & 14.00 & 18.08 & 0.00  & 0.00  \\
       & BO       & 32.60 & 28.38 & 26.84 & 23.97 & 17.70 & 20.50 & 15.22 & 19.86 & 0.00  & 0.00  \\
       & CRM      & 39.08 & 35.96 & 33.91 & 29.87 & 25.64 & 30.25 & 19.21 & 24.04 & 0.00  & 0.00  \\
       & BOIN     & 30.31 & 27.76 & 24.50 & 21.14 & 17.83 & 20.32 & 13.17 & 16.35 & 0.00  & 0.00  \\ \hline
p(DLT)\tnote{5} & LSE($r=0$) & 21.56 & 21.53 & 19.15 & 20.01 & 18.11 & 17.03 & 15.18 & 14.34 & 11.74 & 11.87 \\
       & LSE($r=1$) & 20.60 & 20.73 & 17.84 & 18.31 & 16.26 & 15.78 & 13.72 & 13.32 & 11.03 & 11.07 \\
       & BO       & 20.92 & 20.98 & 18.35 & 19.07 & 17.11 & 16.37 & 14.20 & 13.71 & 11.34 & 11.49 \\
       & CRM      & 20.46 & 20.56 & 19.25 & 20.19 & 18.79 & 18.11 & 15.16 & 15.04 & 12.05 & 11.80 \\
       & BOIN     & 18.33 & 18.51 & 17.20 & 17.76 & 16.03 & 15.22 & 13.41 & 12.99 & 10.49 & 10.06 \\ \hline
\end{tabular}
\begin{tablenotes}
\item[1] PCS=Probability of correct selection
\item[2] PCA=Proportion of correct allocation
\item[3] POS=Probability of overdose selection
\item[4] PCA=Proportion of overdose allocation
\item[5] p(DLT)=Proportion of patients who experienced DLT
\end{tablenotes}
\label{tab: result_1}
\end{threeparttable}

\begin{threeparttable}[htbp]
\centering 
\caption{Complete results of the simulations in Scenarios 11–20.}
\begin{tabular}{llcccccccccc}
\hline
       &          & \multicolumn{10}{c}{Scenario}                                                 \\ \cline{3-12} 
       & Design   & 11    & 12    & 13    & 14    & 15    & 16    & 17    & 18    & 19    & 20    \\ \hline
PCS\tnote{1}    & LSE($r=0$) & 41.90 & 50.65 & 55.40 & 62.45 & 53.90 & 61.15 & 56.15 & 48.75 & 93.30 & 80.80 \\
       & LSE($r=1$) & 45.95 & 52.45 & 63.25 & 67.15 & 64.75 & 68.05 & 60.45 & 51.40 & 95.70 & 79.90 \\
       & BO       & 42.85 & 50.50 & 57.50 & 63.80 & 57.80 & 64.40 & 59.75 & 51.05 & 94.10 & 80.65 \\
       & CRM      & 38.80 & 50.25 & 52.05 & 61.15 & 44.00 & 56.40 & 55.30 & 45.40 & 93.85 & 84.10 \\
       & BOIN     & 33.80 & 42.55 & 51.15 & 54.70 & 47.00 & 56.00 & 53.50 & 43.85 & 83.65 & 76.90 \\ \hline
PCA\tnote{2}    & LSE($r=0$) & 49.49 & 53.09 & 44.81 & 46.27 & 44.13 & 44.74 & 35.19 & 32.57 & 59.03 & 48.10 \\
       & LSE($r=1$) & 57.57 & 61.54 & 48.48 & 46.99 & 47.03 & 46.55 & 33.75 & 30.50 & 55.25 & 42.43 \\
       & BO       & 53.82 & 57.05 & 49.45 & 48.65 & 48.00 & 47.50 & 35.07 & 32.32 & 57.68 & 43.94 \\
       & CRM      & 48.58 & 54.77 & 38.13 & 42.48 & 33.98 & 39.86 & 36.03 & 31.95 & 61.80 & 51.77 \\
       & BOIN     & 39.38 & 44.37 & 42.81 & 43.26 & 36.55 & 39.33 & 33.93 & 30.48 & 50.95 & 43.53 \\ \hline
POS\tnote{3}    & LSE($r=0$) & 35.95 & 27.75 & 38.80 & 26.05 & 40.70 & 29.30 & 28.15 & 33.05 & 0.00  & 0.00  \\
       & LSE($r=1$) & 31.50 & 25.55 & 34.25 & 24.00 & 32.65 & 24.85 & 24.50 & 28.55 & 0.00  & 0.00  \\
       & BO       & 34.60 & 27.45 & 37.40 & 25.75 & 37.20 & 26.80 & 25.95 & 30.70 & 0.00  & 0.00  \\
       & CRM      & 38.75 & 26.35 & 42.20 & 27.55 & 50.55 & 34.20 & 34.00 & 42.35 & 0.00  & 0.00  \\
       & BOIN     & 32.20 & 22.85 & 30.90 & 22.15 & 36.80 & 23.85 & 24.90 & 29.85 & 0.00  & 0.00  \\ \hline
POA\tnote{4}    & LSE($r=0$) & 36.76 & 33.47 & 35.92 & 28.95 & 30.83 & 26.77 & 22.40 & 24.03 & 0.00  & 0.00  \\
       & LSE($r=1$) & 28.64 & 24.99 & 27.00 & 21.15 & 22.69 & 18.43 & 16.35 & 18.02 & 0.00  & 0.00  \\
       & BO       & 32.38 & 29.46 & 30.15 & 24.13 & 25.25 & 21.42 & 17.95 & 19.83 & 0.00  & 0.00  \\
       & CRM      & 37.40 & 30.96 & 42.45 & 32.10 & 41.89 & 34.06 & 26.86 & 31.72 & 0.00  & 0.00  \\
       & BOIN     & 34.43 & 29.22 & 29.00 & 23.92 & 29.22 & 23.02 & 18.73 & 20.39 & 0.00  & 0.00  \\ \hline
p(DLT)\tnote{5} & LSE($r=0$) & 29.04 & 30.03 & 28.94 & 29.57 & 26.16 & 27.23 & 23.40 & 22.77 & 19.02 & 18.96 \\
       & LSE($r=1$) & 28.23 & 28.92 & 26.90 & 27.23 & 24.34 & 24.87 & 21.38 & 21.10 & 18.12 & 17.84 \\
       & BO       & 28.51 & 29.44 & 28.06 & 28.43 & 25.33 & 26.03 & 22.07 & 21.76 & 18.73 & 18.11 \\
       & CRM      & 28.89 & 29.87 & 29.80 & 30.14 & 27.15 & 28.98 & 24.75 & 24.37 & 19.73 & 19.64 \\
       & BOIN     & 24.96 & 25.92 & 26.26 & 27.15 & 23.80 & 25.08 & 22.02 & 21.68 & 17.22 & 17.92 \\ \hline
\end{tabular}
\begin{tablenotes}
\item[1] PCS=Probability of correct selection
\item[2] PCA=Proportion of correct allocation
\item[3] POS=Probability of overdose selection
\item[4] PCA=Proportion of overdose allocation
\item[5] p(DLT)=Proportion of patients who experienced DLT
\end{tablenotes}
\label{tab: result_2}
\end{threeparttable}

\begin{figure}[htbp]
\begin{center}
\includegraphics[width=0.8\linewidth]{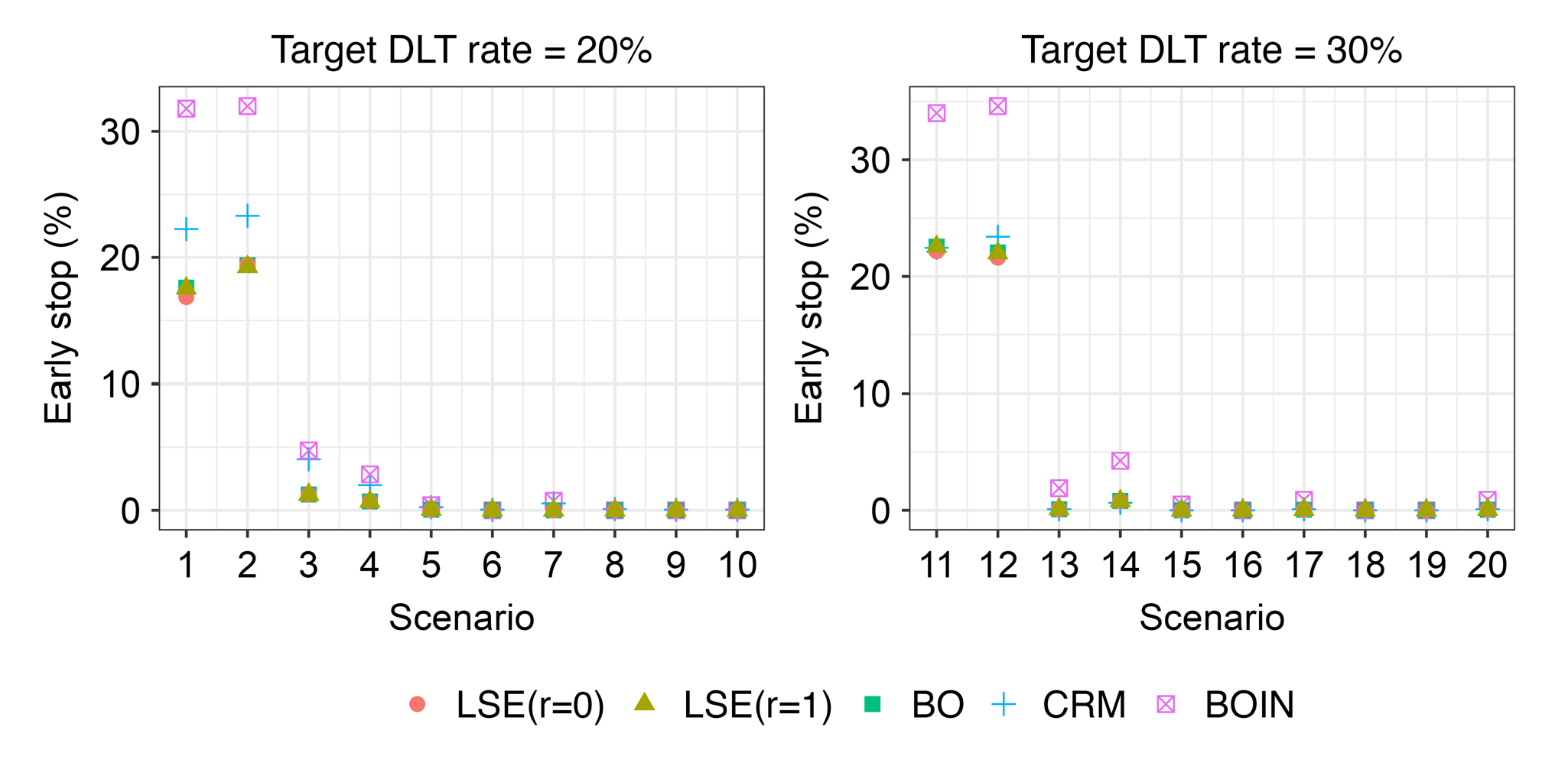}
\caption{Proportion of simulated trials that were terminated early owing to the safety stopping rule.}
\label{fig: Early stop}
\end{center}
\end{figure}

\begin{figure}[htbp]
\begin{center}
\includegraphics[width=0.8\linewidth]{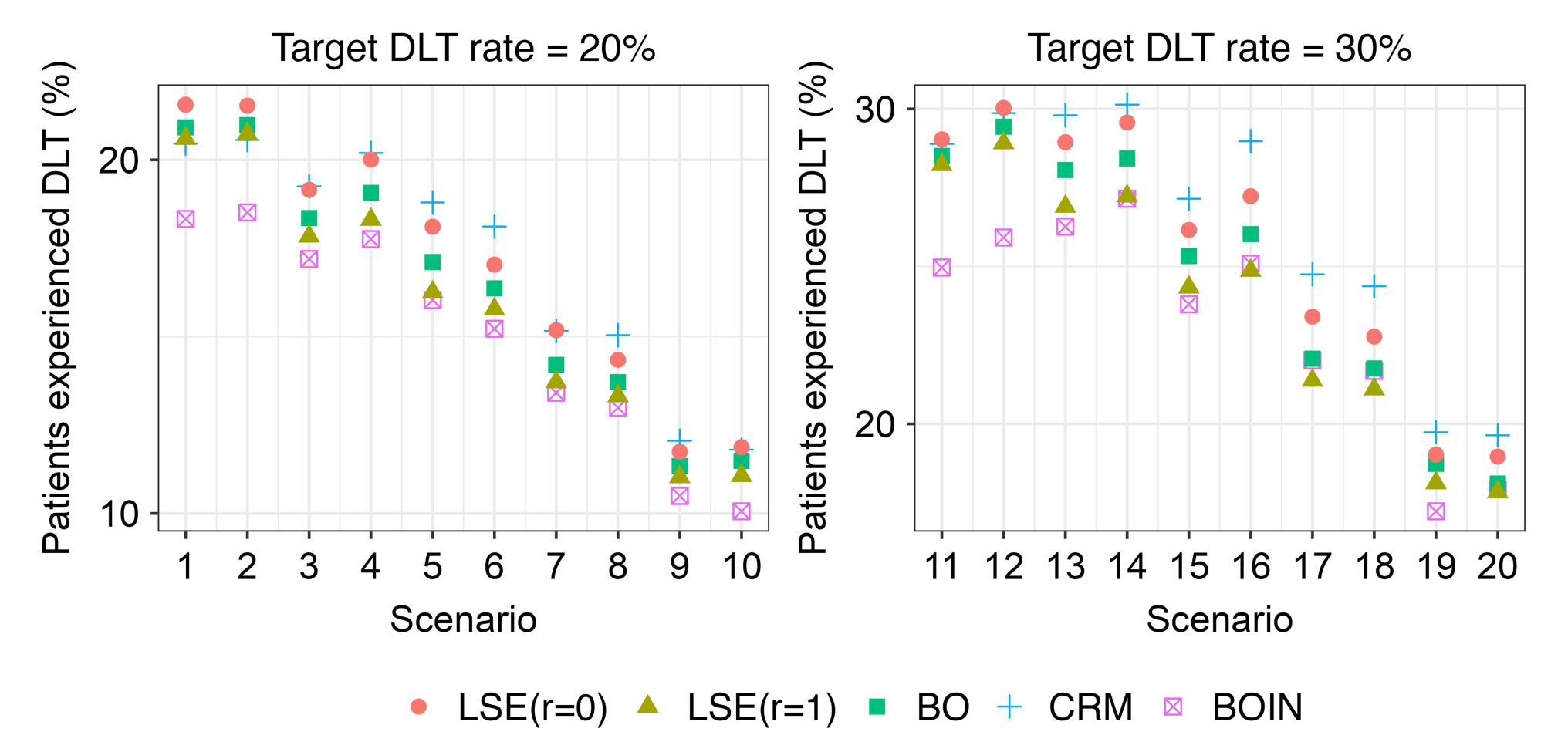}
\caption{Average proportion of patients who experienced DLT within each simulated trial across 2000 simulations.}
\label{fig: DLT_p}
\end{center}
\end{figure}

\end{document}